\begin{document}
%
\title{Collective Decision for Open Set Recognition}
%
%
%
%

\author{Chuanxing~Geng~and~Songcan~Chen
\IEEEcompsocitemizethanks{\IEEEcompsocthanksitem The authors are with College of Computer Science and Technology, Nanjing University of Aeronautics and Astronautics, MIIT Key Laboratory of Pattern Analysis and Machine Intelligence, Nanjing, 211106, China. E-mail: \{gengchuanxing, s.chen\}@nuaa.edu.cn.}
\thanks{}}

%
%

\markboth{Journal of \LaTeX\ Class Files,~Vol.~14, No.~8, August~2015}%
{Shell \MakeLowercase{\textit{et al.}}: Bare Demo of IEEEtran.cls for Computer Society Journals}
%



\IEEEtitleabstractindextext{%
\begin{abstract}
  In open set recognition (OSR), almost all existing methods are designed specially for recognizing individual instances, even these instances are collectively coming in batch. Recognizers in decision either reject or categorize them to some known class using empirically-set threshold. Thus the decision threshold plays a key role. However, the selection for it usually depends on the knowledge of known classes, inevitably incurring risks due to lacking available information from unknown classes. On the other hand, a more realistic OSR system should NOT just rest on a reject decision but should go further, especially for discovering the hidden unknown classes among the reject instances, whereas existing OSR methods do not pay special attention. In this paper, we introduce a novel collective/batch decision strategy with an aim to extend existing OSR for new class discovery while considering correlations among the testing instances. Specifically, a collective decision-based OSR framework (CD-OSR) is proposed by slightly modifying the Hierarchical Dirichlet process (HDP). Thanks to HDP, our CD-OSR does not need to define the decision threshold and can implement the open set recognition and new class discovery simultaneously. Finally, extensive experiments on benchmark datasets indicate the validity of CD-OSR.
\end{abstract}

\begin{IEEEkeywords}
Open set recognition, collective decision, new class discovery, hierarchical dirichlet process.
\end{IEEEkeywords}}

\maketitle

\IEEEdisplaynontitleabstractindextext

%
\IEEEpeerreviewmaketitle

\IEEEraisesectionheading{\section{Introduction}\label{sec:introduction}}

%
%
%
%
\IEEEPARstart{I}{n} real-world recognition/classification tasks, limited by various objective factors, it is usually difficult to collect training instances exhaustively of all classes when training a classifier. A more realistic scenario is open set recognition (OSR) \cite{Scheirer2013Toward}, where incomplete knowledge of the world exists at training time, and unknown classes can be submitted to an algorithm during testing. This requires the classifiers to not only accurately classify the seen known classes but also effectively deal with the unknown ones.

The main challenge for OSR is that the traditional classifiers usually trained under closed set scenario divide over-occupied space for known classes, thus resulting in misclassifying the instances of unknown classes unseen in training as the known classes. To meet this challenge, related studies have been conducted under a number of frameworks, assumptions and names \cite{Phillips2005Evaluation,Li2005Open,wu2007novel,wang2009A,Heflin2012Detecting,Pritsos2013Open}. For example, Phillips et al. \cite{Phillips2005Evaluation} proposed a typical framework for open set identity recognition in a study on evaluation methods for face recognition, while Li and Wechsler \cite{Li2005Open} again viewed open set face recognition from an evaluation perspective and proposed the Open Set TCM-kNN algorithm. It is Scheirer et al. \cite{Scheirer2013Toward} that first formalized the open set recognition problem and proposed a preliminary solution---1-vs-Set machine, which incorporates an open space risk term in modeling to account for the space beyond the reasonable support of known classes. Although 1-vs-Set machine decreases the region of known class for each binary support vector machine (SVM), the space occupied by each known class remains unbounded. Therefore, the open space risk still exists. As shown in Fig. 1, the 1-vs-Set machine will make misclassifications if the instances of unknown classes ?2,?3 appear in testing. To overcome this problem, researchers have further made many efforts.

\begin{figure}[!t]
\centering
\includegraphics[width=8.7cm]{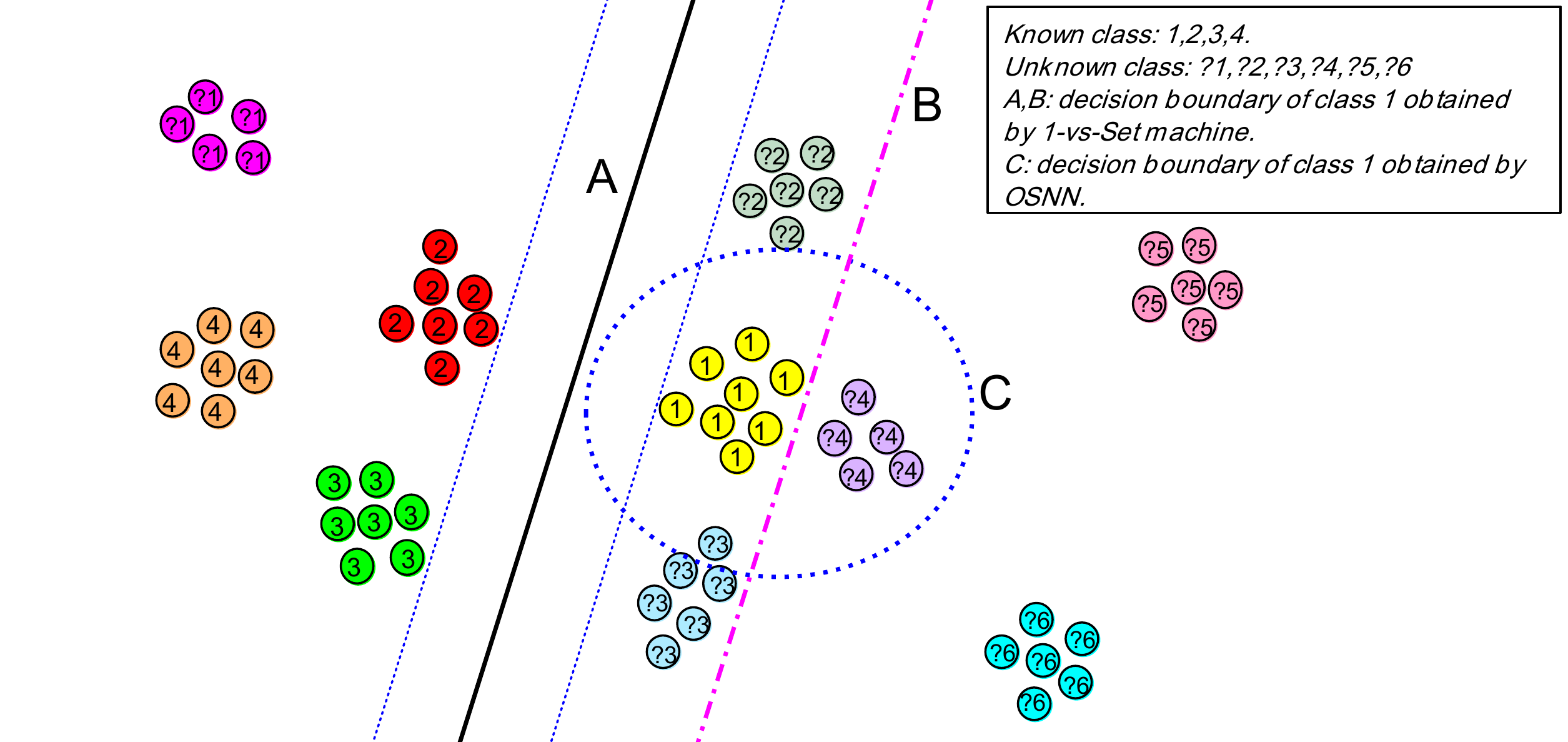}
\caption{Only known classes 1-4 are available in training, while unknown classes ?1-?6 appear during testing. 'A' and 'B' are the decision boundaries of class 1 obtained by 1-vs-Set machine, while 'C' is the decision boundary of class 1 obtained by OSNN.}
\end{figure}

Scheirer et al. \cite{Scheirer2014Probability} incorporated non-linear kernels into a solution that further limited the open space risk by positively labeling only set with finite measure. Specifically, they proposed a novel Weibull-calibrated SVM (W-SVM), which combines the statistical extreme value theory (EVT) for score calibration with one-class and binary SVMs. Intuitively, we can reject the large set of unknown classes (even under an assumption of incomplete class knowledge) if the positive data for any known classes is accurately modeled without overfitting. Based on this intuition, Jain et al. \cite{jain2014multi} invoked EVT to model the positive training data at the decision boundary and proposed the $P_I$-SVM algorithm. Note that both W-SVM and $P_I$-SVM adopt the threshold-based classification scheme, thus the thresholds play a key role. However, the thresholds in those models are usually assumed to be equal for all known classes, which is not reasonable since the distributions of known classes in feature space are unknown. On the other hand, the authors in \cite{Scheirer2014Probability,jain2014multi} recommended setting these thresholds according to the problem openness. But unfortunately the openness of the corresponding problem usually is also unknown. To overcome these deficiencies, Scherreik et al. \cite{Scherreik2016Open} proposed the probabilistic open set SVM classifier (POS-SVM), where the unique reject threshold for each known class is empirically determined.

Recently, J$\acute{\text{u}}$nior et al. \cite{junior2017nearest} extended the Nearest-Neighbor classifier to OSR scenario and proposed the OSNN classifier. Zhang et al. \cite{zhang2017sparse} proposed the SROSR algorithm based on sparse representation, where they modeled the tails of the matched and sum of non-matched reconstruction error distributions using EVT. Taking distributional information into account when learning recognition functions, Rudd et al. \cite{rudd2018extreme} formulated a theoretically sound classifier---the Extreme Value Machine (EVM) which is further developed in \cite{Vignotto2018Extreme}. Besides, researchers also explored open set recognition based on deep neural networks \cite{Bendale2015Towards1,Rozsa2017Adversarial,Hassen2018Learning,Shu2017DOC,Cardoso2015A,Cardoso2017Weightless,Shu2018Unseen,ge2017generative,Xu2018Learning}.

In summary, all current existing OSR algorithms are designed specially for recognizing individual instances, even these instances are all arriving collectively in batch like image-set recognition \cite{Wang2017Joint}. Only one decision that so-designed recognizer can make is to either reject or categorize them to some known class instance by instance using some empirically-set threshold. Thus the decision threshold plays a key role. However, the selection for it is usually based on the knowledge of known classes. This inevitably incurs risks due to no available information from unknown classes. As shown in Fig. 1, the decision boundary\footnote{The decision boundary of a class defines the region in which a possible testing sample will be classified as belonging to that class.} 'C' obtained by OSNN for known class 1 can reject the large set of unknown classes, whereas it still makes a misclassification when unknown class ?4 appears in testing.

On the other hand, a more realistic or desired OSR system should NOT just rest on a reject decision but should go further, especially for discovering the hidden unknown classes among the reject instances. Unfortunately, existing OSR methods do not pay special attention to this point. Although Bendale and Boult \cite{Bendale2015Towards} introduced the open world recognition framework which can collect and label (e.g. by humans) the reject instances to further use for updating the OSR model, it is actually a post-event strategy needing human intervention. Meanwhile, the authors in \cite{Shu2018Unseen} transferred the knowledge of the similarity and difference in known classes for new class rediscovery among the already-rejected instances. Obviously, this is still a post-event approach. In fact, such a two-step manner \cite{Bendale2015Towards,Shu2018Unseen} easily incurs a suboptimal solution. Therefore, it is necessary to specifically design a model so that both open set recognition and new class discovery can be proceeded simultaneously.

Towards this goal, in this paper, we attempt to introduce a novel collective/batch decision strategy for OSR. As a Bayesian nonparametric modeling method, hierarchical Dirichlet process (HDP) does not overly depend on the training data and can achieve adaptive change as the data changes \cite{gershman2012tutorial}. In particular, for the new coming instance, HDP has the ability to assign this instance an existing subclass or a new subclass drawn from the base distribution (the details c.f. subsection 3.1). Such a property makes HDP automatically reserve space for unknown classes in testing, naturally leading to a new class discovery function. Therefore, with slight modification to HDP, we propose a collective decision-based OSR framework (CD-OSR)
as an initial solution towards the open set recognition of collective decision. Note that the HDP can also be replaced by other Bayesian nonparametric techniques \cite{gershman2012tutorial} like the hierarchical beta process \cite{thibaux2007hierarchical}, but beyond our focus here. Thanks to the properties of HDP, CD-OSR does not need to define the decision threshold and can implement the open set recognition and new class discovery simultaneously. Additionally, treating the testing instances in batch makes CD-OSR take into account correlations among the instances obviously ignored by existing methods. Note that CD-OSR actually can handle both batch and individual instances. Specifically, the contributions and details of our CD-OSR can be highlighted as follows:
%
%
%
%
%
%
\begin{enumerate}[\IEEEsetlabelwidth{8)}]
\item A novel collective/batch decision strategy is first introduced for open set recognition, which can address the instances \emph{in batch}, even \emph{individual} instances. Specifically, a collective decision-based OSR framework (CD-OSR) is proposed, which can address both the open set recognition and new class discovery simultaneously.
\item CD-OSR does not need to define the decision threshold and can automatically reserve space for unknown classes in testing, naturally leading to the new class discovery function.
\item Treating the testing instances in batch makes CD-OSR consider correlations among the instances obviously ignored by the other existing OSR motheds.
\item A thorough empirical evaluation of CD-OSR is reported, showing the significant improvement in classification performance and the new class discovery function.
\end{enumerate}

The remainder of this paper is organized as follows. Section 2 gives the related work in open set recognition. Section 3 introduces a novel collective decision strategy for OSR, where a collective decision-based OSR framework is specifically given. Experimental evaluation is reported in Section 4. Finally, Section 5 gives a conclusion.

\section{Related Work}
With the formalization of OSR developed in \cite{Scheirer2013Toward}, the openness of a particular problem or data universe is defined by considering the number of training, target, and testing classes:
\begin{equation}
\text{openness} = 1 - \sqrt{\frac{2\times|\text{training classes}|}{|\text{testing classes}|+|\text{target classes}|}}.
\end{equation}
Larger openness corresponds to more open problems, while the problem is completely closed when $\text{openness}=0.$ Furthermore, the OSR problem can be defined as follows: given a set of training data $U$, an open space risk $R_{\mathcal{O}}$, and an empirical risk $R_{\varepsilon}$, the goal of OSR is to find a measurable recognition function $f\in \mathcal{H}$ defined by minimizing the following open set risk
\begin{equation}
f = \arg\min_{\hat{f}\in \mathcal{H}}\{R_{\mathcal{O}}(\hat{f}) + \lambda_rR_{\varepsilon}(\hat{f}(U))\},
\end{equation}
where $\lambda_r$ is a regularization constant. Thanks to the guidance of this definition, a large number of OSR algorithms have been proposed. Next, we will briefly review the relevant representative approaches.

\subsection{The Existing OSR Methods}

\subsubsection{The 1-vs-Set Machine}
Using the definition of OSR, an SVM-based OSR method called the 1-vs-Set machine \cite{Scheirer2013Toward} is proposed. In 1-vs-Set, the open space risk $R_{\mathcal{O}}$ is considered to be the ratio of the Lebesgue measure of positively labeled open space compared to the overall measure of positively labeled space. Concretely, a hyperplane 'B' (shown in Fig. 1) parallelling the separating hyperplane 'A' obtained by the SVM is derived in score space, leading to a slab in feature space. Thus, the open space risk for a linear kernel slab model is defined as follows:
\begin{equation}
R_{\mathcal{O}} = \frac{\zeta_B - \zeta_A}{\zeta^{+}} + \frac{\zeta^{+}}{\zeta_B - \zeta_A} + p_A\omega_A + p_B\omega_B,
\end{equation}
where $\zeta_A$ and $\zeta_B$ denote the marginal distances of the corresponding hyperplanes, and $\zeta^{+}$ is the separation needed to account for all positive data. Additionally, user-specified parameters $p_A$ and $p_B$ are given to weight the importance between the margin spaces $\omega_A$ and $\omega_B$.

After training the 1-vs-Set machine, a testing instance that appears between the two hyperplanes would be labeled with the appropriate class. Otherwise, it is considered as non-target class or rejected, depending on which side of the slab it resides. As discussed in Section 1, the 1-vs-Set machine reduces the open space risk to some extent. However, it still occupies the infinite space, meaning the open space risk still exists.

\subsubsection{The W-SVM Model}
To further reduce the open space risk, Scheirer et al. \cite{Scheirer2014Probability} incorporated non-linear kernels into a solution that further limited open space risk by positively labeling only sets with finite measure. They formulated a compact abating probability (CAP) model, where probability of class membership abates as points move from known data to open space. Specifically, a Weibull-calibrated SVM (W-SVM) model was proposed, which combined the EVT for score calibration with two separated SVMs. The first is a one-class SVM CAP model used as a conditioner: if the posterior estimate $P_O(y|x)$ of an input instance $x$ predicted by one-class SVM is less than a threshold $\rho_\tau$, the instance will be rejected outright. Otherwise, it will be passed to the second SVM. The second one is a binary SVM CAP model via a fitted Weibull cumulative distribution function, yielding the posterior estimate $P_{\eta}(y|x)$ for the corresponding positive class. Furthermore, it also obtains the posterior estimate $P_{\psi}(y|x)$ for the corresponding negative classes by a reverse Weibull fitting. Defined an indicator variable: $\iota_y=1$ if $P_O(y|x)>\rho_\tau$ and $\iota_y=0$ otherwise, then the W-SVM model for OSR is defined as follows
\begin{eqnarray}
\begin{split}
y^* &= \ \arg\max\limits_{y\in\mathcal{Y}} P_\eta(y|x)\times P_{\psi}(y|x)\times \iota_y  \\
    &\text{subject to}\ \  P_\eta(y^*|x)\times P_{\psi}(y^*|x) \geq \rho_R,
\end{split}
\end{eqnarray}
where $\mathcal{Y}$ denotes all the known classes, and $\rho_R$ is the threshold of the second SVM CAP model.

Additionally, the thresholds $\rho_\tau$ and $\rho_R$ are set empirically, e.g., $\rho_\tau$ is fixed to 0.001 as specified by the authors, while $\rho_R$ is recommend to set depending on the openness of the specific problem by
\begin{equation}
\rho_R = 0.5\times \text{openness}.
\end{equation}
The W-SVM effectively limits the open space risk by the threshold-based classification schemes. However, such a threshold setting, especially for $\rho_R$, is difficult since we usually have no prior knowledge about unknown classes.

\subsubsection{The OSNN Model}
Adapting the traditional closed-set Nearest Neighbor classifier to the OSR scenario, J$\acute{\text{u}}$nior et al. \cite{junior2017nearest} proposed the OSNN classifier. Let $\vartheta(x)\in \mathcal{L}=\{\ell_1,\ell_2,...,\ell_n\}$ represent the class of the corresponding instance $x$ and $\mathcal{L}$ be the set of training labels (known classes). The OSNN first finds the nearest neighbor $u_1$ and $u_2$ of testing instance $s$, where $\vartheta(u_1)\neq\vartheta(u_2)$. Then one can calculate the ratio $\upsilon=d(s,u_1)/d(s,u_2)$, in which $d(x,x')$ is the Euclidean distance between instances $x$ and $x'$ in the feature space. If $\upsilon$ is less than or equal to the predefined threshold $\sigma$ ($0<\sigma<1$), $s$ is classified as the same label of $u_1$. Otherwise, it is considered as unknown, i.e.,
\begin{equation}
   \vartheta(s) = \left\{
   \begin{aligned}
   &\vartheta(u_1) \ \ \ \ \text{if}\ \  \upsilon\leq\sigma  \\
   &\text{"unknown"} \ \ \ \text{if}\ \  \upsilon>\sigma\\
   \end{aligned}
   \right.
\end{equation}

Note that applying a threshold on the ratio of similarity scores seems better than on the similarity scores themselves as reported in \cite{junior2017nearest}. However, the selection of such a threshold is still an empirical setting, inevitably incurring risks due to lacking available information from unknown classes. As described in Section 1, the OSNN will make a misclassification when unknown class ?4 appears in testing. In addition, just selecting two reference instances from different classes for comparison makes the OSNN model vulnerable for outliers.


\subsection{Unseen Class Discovery in existing OSR}
In fact, there are also some researchers paying attention to the unknown class discovery in OSR. To further extend open set recognition, the authors in \cite{Bendale2015Towards} formalized the open world recognition problem: a recognition system should perform four tasks including detecting unknown classes (open set recognition), choosing which instances to label for addition to the model, labelling those instances, and then updating the classifier. Ideally, all of these tasks should be automated. But in \cite{Bendale2015Towards}, the authors just presumed supervised learning with labels obtained by human labeling. In addition, the unknown class discovery in \cite{Bendale2015Towards} is a post-event strategy, meaning that people must first obtain the rejected instances before proceeding to the new class's discovery. Actually, such a two-step manner easily incurs a suboptimal solution.

Besides, Shu et al. \cite{Shu2018Unseen} mainly focused on discovering the unknown classes hiding among the reject instances by transferring the knowledge of the similarity and difference in known classes. Correspondingly, a joint open classification framework was proposed with four components: an Open Classification Network (OCN) used for open set recognition, a Pairwise Classification Network (PCN) for classifying whether two input instances are from the same class or not, an auto-encoder for learning representation from unlabeled instances, and a hierarchical clustering model for clustering the reject instances. Similar to \cite{Bendale2015Towards}, this approach adopts a two-step manner as well. Furthermore, the use of knowledge in known classes is risky when the transferred knowledge in known and unknown classes differs.

\section{Collective Decision for Open Set Recognition}
As discussed previously, the current existing OSR methods are designed specially for recognizing individual instances, even these instances are all arriving collectively in batch. Hence recognizers in decision either reject or categorize them to some known class instance by instance using empirically-set threshold, where such a decision threshold plays a key role. However, its selection is usually based on the knowledge of known classes, inevitably incurring risks due to no available information from unknown classes. On the other hand, a more realistic OSR system should NOT just rest on a reject decision but should go further, especially for discovering the hidden unknown classes in the reject instances. Regrettably, existing OSR methods do not pay much attention. Although \cite{Bendale2015Towards,Shu2018Unseen} have made some efforts, they are just a two-step strategy.

To overcome these limitations mentioned above, we introduce a novel collective decision strategy for OSR problem with an aim to extend existing open set recognition for new class discovery. Specifically, a collective decision-based OSR framework (CD-OSR) is proposed by slightly modifying HDP. Thanks to the properties of HDP, Our CD-OSR does not need to define the decision threshold and can automatically reserve space for unknown classes in testing, naturally leading to the new class discovery function. Interestingly, this also makes it able to handle OSR and new class discovery at the same time. Moreover, treating the testing instances in batch makes CD-OSR consider correlations among the instances obviously ignored by existing methods. Note that CD-OSR actually can handle both batch and individual instances.

Next, we first give a brief review of HDP \cite{Teh2006Hierarchical} widely used for co-clustering multiple groups of data by sharing mixture components among the groups. In HDP, the commonly used terms are 'group' or 'component'. However, we here adapt HDP with slight modification to the OSR problem. Under the OSR scenario, 'class' actually corresponds to 'group', while 'subclass' corresponds to 'component'. Therefore, in order to avoid confusion, we unify these terms ('class' $\leftrightarrow$ 'group', 'subclass' $\leftrightarrow$ 'component') throughout this paper.

%

\subsection{Hierarchical Dirichlet Process}
The Dirichlet process (DP) \cite{teh2011dirichlet,Gershman2012A} considered as a distribution over distributions is a stochastic process, which is mainly used in clustering and density estimation problems as a nonparametric prior defined over the number of mixture components. As a hierarchical extension to DP, the Hierarchical Dirichlet Process \cite{Teh2006Hierarchical} is proposed, modeling each group of data in the form of a Dirichlet process mixture model (DPM). Under this hierarchical structure, an elegant way of sharing parameters is provided, allowing the DPM models across different groups to be connected together through a higher level DP.

Let $x_{ji}\in R^d,\ i=\{1,...,n_j\},\ j=\{1,...,J\}$ denote the instance $i$ in the group $j$ where $n_j$ denotes the number of instances in group $j$, $J$ is the total number of groups, and $\theta_{ji}$ represents the parameters of the mixture component associated with $x_{ji}$. Then the HDP framework is completed as follows:
\begin{equation}
  \begin{split}
   &G_0|\gamma,H \sim \text{DP}(\gamma,H)    \\
   &G_j|\alpha_0,G_0 \sim \text{DP}(\alpha_0,G_0)\ \text{for each}\ j\\
   &\theta_{ji}|G_j \sim G_j\ \ \text{for each $j$, $i$}\\
   &x_{ji}|\theta_{ji} \sim F(\theta_{ji})\ \ \text{for each $j$, $i$},
  \end{split}
\end{equation}
where $G_0$ as a global distribution is distributed as a Dirichlet process with concentration parameter $\gamma$ and base distribution $H$. $G_j$ for each group is distributed according to the DP with concentration parameter $\alpha_0$ and base distribution $G_0$. Moreover, as $\alpha_0$ increases, the number of components (or clusters) used to represent each group data increases. Note that although increasing $\gamma$ can add the clusters used to represent the data of all groups, the degree to which these clusters are shared between groups will decrease at the same time \cite{Canini2010Modeling}. Besides, $x_{ji}$ can also been viewed as a draw from a distribution $F(\theta_{ji})$.

An intuitive understanding of the generative process defined by a HDP model can be through an analogy to the Chinese Restaurant Franchise (CRF). CRF actually extends upon the Chinese restaurant process (CRP), allowing multiple restaurants to share a set of dishes. In the CRF metaphor, customer $i$ in restaurant $j$ is associated with $\theta_{ji}$ and sits at table $t_{ji}$. The table $t$ is associated with one of the $K$ random draws from $H$, i.e., $\psi_{jt}\in\{\phi_1,...,\phi_K\}$ denoting the global menu of dishes. Moreover, a dish from the global menu served at table $t$ in restaurant $j$ is represented by the indicator variable $k_{jt}$. In addition, the concentration parameter $\gamma$ controls the prior probability of serving a new dish at a new table \cite{akova2012self}.

In this framework, the restaurants correspond to groups, the tables in each restaurant correspond to the mixture components in the DP mixture model, and the dishes in the global menu correspond to the unique set of parameters shared among the restaurants.

The conditional distributions for $\theta_{ji}$ and $\psi_{jt}$ can be obtained by integrating out $G_j$ and $G_0$, respectively.
\begin{equation}
\theta_{ji}|\theta_{j1},...,\theta_{j,i-1},\alpha_0,G_0 \sim \sum_{t=1}^{m_{j\cdot}}\frac{n_{jt\cdot}}{i-1+\alpha_0}\delta_{\psi_{jt}}+\frac{\alpha_0}{i-1+\alpha_0}G_0,
\end{equation}
where $m_{j\cdot}$ represents the number of tables in restaurant $j$, $n_{jt\cdot}$ is the number of customers in restaurant $j$ at table $t$, and $\delta_{\psi_{jt}}$ denotes the Dirac measure\footnote{The definition of Dirac measure is given in Supplementary Material.} at $\psi_{jt}$. According to (8), the conditional $\theta_{ji}$ is assigned to one of the existing $\psi_{jt}$ with probability $\frac{n_{jt\cdot}}{i-1+\alpha_0}$ or $\psi_{j,m_{j\cdot+1}}$, i.e., a new table drawn from $G_0$ with probability $\frac{\alpha_0}{i-1+\alpha_0}$. Marginal counts are represented with dots. Similarly,
\begin{eqnarray}
\psi_{jt}|\psi_{11},\psi_{12},...,\psi_{21},...,\psi_{jt-1},\gamma,H \sim \\ \nonumber
\sum_{k=1}^{K}\frac{m_{\cdot k}}{m_{\cdot\cdot}+\gamma}\delta_{\phi_k}+\frac{\gamma}{m_{\cdot\cdot}+\gamma}H,
\end{eqnarray}
where $m_{\cdot k}$ represents the number of tables across all restaurants serving dish $\phi_k$, $m_{\cdot\cdot}$ denotes the total number of tables occupied by all restaurants, and $\delta_{\phi_k}$ denotes the Dirac measure at $\phi_k$. According to (9), the conditional distribution $\psi_{jt}$ inherits one of the existing $\phi_k$ with probability $\frac{m_{\cdot k}}{m_{\cdot\cdot}+\gamma}$ or $\phi_{K+1}$, i.e., a new dish drawn from $H$ with probability $\frac{\gamma}{m_{\cdot\cdot}+\gamma}$.

The inference of CRF can be performed by using a Gibbs sampling scheme \cite{Teh2006Hierarchical,ishwaran2001gibbs}. In order to simplify the derivation without losing the general applicability, the base distribution $H$ here is conjugate to the data distribution $F$. Furthermore, for ease of understanding, the notations used here are the same as those in \cite{Teh2006Hierarchical}. Let $z_{ji} = k_{jt_{ji}}$ represent $x_{ji}$'s index variable. Since $t_{ji}$ and $k_{jt}$ are the respective index variables of $\theta_{ji}$ and $\psi_{jt}$ ($\theta_{ji}=\psi_{jt_{ji}}$, $\psi_{jt} = \phi_{k_{jt}}$, the prior of $\phi_{k_{jt}}$ is $H$), we actually sample $t_{ji}$ and $k_{jt}$ instead of dealing with $\theta_{ji}$'s and $\psi_{jt}$'s directly. Let $\mathbf{x}=(x_{ji}:\text{all $j,i$})$, $\mathbf{x}_{jt}=(x_{ji}: \text{all $i$ with $t_{ji} = t$})$, $\mathbf{t}=(t_{ji}:\text{all $j,i$})$, $\mathbf{k}=(k_{jt}:\text{all $j,t$})$, $\mathbf{z}=(z_{ji}:\text{all $j,i$})$, $\mathbf{m}=(m_{jk}:\text{all $j,k$})$ and $\bm{\phi}=(\phi_1,...,\phi_K)$ be described like those in \cite{Teh2006Hierarchical}. Let $\mathbf{t}^{-ji}$, $\mathbf{k}^{-jt}$ or $n_{jt}^{-ji}$, $m_{\cdot k}^{-jt}$ respectively denote the corresponding  superscripts removed from the sets or from the calculation of the counts. Let $f(\cdot|\theta)$ and $h(\cdot)$ respectively denote the densities $F(\theta)$ and $H$. Integrating out the mixture component parameters $\bm{\phi}$, we can obtain $x_{ji}$'s conditional density under mixture component $k$ given all data items except $x_{ji}$ as
\begin{equation*}
f_k^{-x_{ji}}(x_{ji}) = \frac{\int f(x_{ji}|\phi_k)\prod_{j'i'\neq ji,z_{j'i'}=k}f(x_{j'i'}|\phi_k)h(\phi_k)d\phi_k}{\int\prod_{j'i'\neq ji,z_{j'i'}=k}f(x_{j'i'}|\phi_k)h(\phi_k)d\phi_k}.
\end{equation*}
Similarly, the conditional density of $x_{jt}$, i.e., $f_k^{-x_{jt}}(x_{jt})$, can be obtained given all data items associated with mixture component $k$ leaving out $x_{jt}$. Note that the conditional distributions here have omitted the conditions such as \emph{concentration} parameters, e.g., $f_k^{-x_{ji}}(x_{ji})=f_k^{-x_{ji}}(x_{ji}|\bm{x}^{-{ji}},\alpha_0,\gamma)$. Then the conditional distribution of $t_{ji}$ is
\begin{eqnarray}
   \begin{split}
   &p(t_{ji}=t|\mathbf{t}^{-ji},\mathbf{k}) \propto\\
   &\left\{
   \begin{aligned}
   &n_{jt\cdot}^{-ji}f_{k_{jt}}^{-x_{ji}}(x_{ji}),&&\text{if $t$ previously used,} \\
   &\alpha_0p(x_{ji}|\mathbf{t}^{-ji},t_{ji}=t^{\text{new}},\mathbf{k}),&&\text{if $t=t^{\text{new}}$},
   \end{aligned}
   \right.
   \end{split}
\end{eqnarray}
where
\begin{eqnarray*}
&&p(x_{ji}|\mathbf{t}^{-ji},t_{ji}=t^{\text{new}},\mathbf{k})=\\
&&\sum_{k=1}^K\frac{m_{\cdot k}}{m_{\cdot\cdot}+\gamma}f_k^{-x_{ji}}(x_{ji})+\frac{\gamma}{m_{\cdot\cdot}+\gamma}f_{k^{\text{new}}}^{-x_{ji}}(x_{ji}),
\end{eqnarray*}
where $f_{k^{\text{new}}}^{-x_{ji}}(x_{ji})=\int f(x_{ji}|\phi)h(\phi)d\phi$ is simply the prior density of $x_{ji}$. Furthermore, the conditional distribution of $k_{jt}$ is
\begin{eqnarray}
   \begin{split}
   &p(k_{ji}=k|\mathbf{k}^{-jt},\mathbf{t}) \propto \\
   &\left\{
   \begin{aligned}
   &m_{\cdot k}^{-jt}f_k^{-x_{jt}}(x_{jt}),&&\text{if $k$ previously used,}  \\
   &\gamma f_{k^{\text{new}}}^{-x_{jt}}(x_{jt}),&&\text{if $k=k^{\text{new}}$}.
   \end{aligned}
   \right.
   \end{split}
\end{eqnarray}

\subsection{CD-based Open Set Recognition}
Since the properties of hierarchical Dirichlet process (HDP) described above fits our problem, we here adapt HDP with slight modification to
addressing the OSR problem. Thus a collective decision-based OSR framework (CD-OSR) is proposed as an initial solution towards open set recognition of collective decision. Concretely, the CD-OSR works as follows.

\begin{figure}[!t]
\centering
\includegraphics[width=8cm]{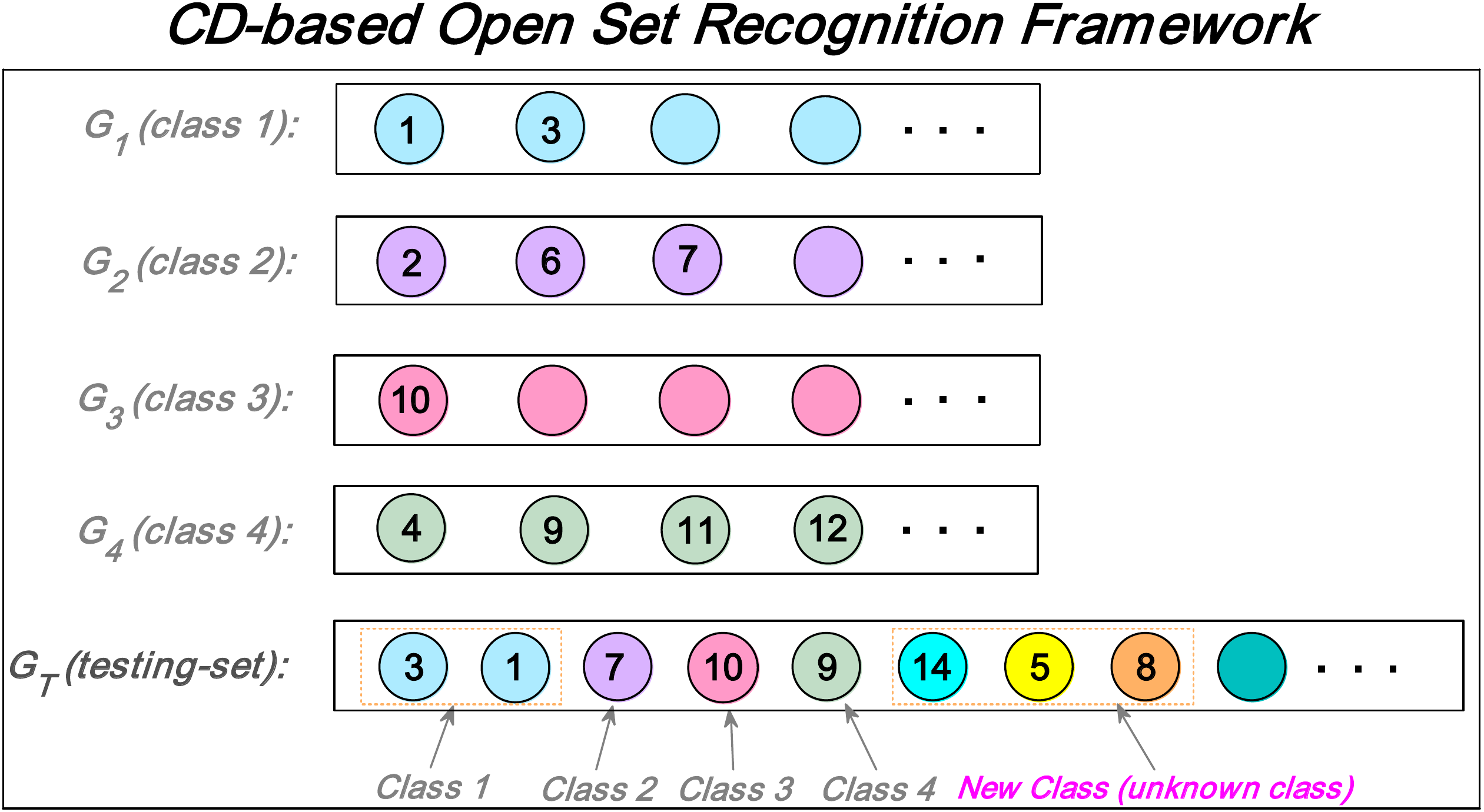}
\caption{Each known class (here is class 1-4), as a group in CD-OSR, is modeled by a Dirichlet Process while the testing set (including unknown classes or not) as a whole is treated in the same way. Then all of the groups are co-clustered under the CD-OSR framework. For a testing instance, it would be labeled as the appropriate known class or unknown class, depending on whether the subclass this sample is assigned associates with the corresponding known class or not. The number in the circle indicates the corresponding subclass.}
\end{figure}
\textbf{(1) Training Phase}: In our CD-OSR framework, we first divide the training set into a fitting set $\mathcal{F}$ and a validation set $\mathcal{V}$ (the details are given in subsection 4.1.1). Next, we model each known class data in $\mathcal{F}$ as a group of HDP using a Gaussian mixture model (GMM) with an unknown number of components. Simultaneously, the whole validation set $\mathcal{V}$ as one batch is treated in the same way. Then all of the groups are co-clustered under the HDP framework. Unlike HDP, we also \emph{append a parameter $\varrho$ denoting the proportion of the corresponding subclass in its class in CD-OSR}. If the $\varrho$ is below some constant $\varepsilon$ after co-clustering, the corresponding subclass intuitively should be omitted for avoiding the overfitting. Note that the role of $\varepsilon$ should not be confused with the thresholds of the existing OSR methods which are used to determine the boundary between known and unknown classes. Then
\emph{repeat this process several times to preform a grid search operation on the corresponding candidate parameter set, and obtain the appropriate initialization parameter values for CD-OSR.} Note that these parameters actually do not overly depend on training data due to the properties of HDP.



\textbf{(2) Testing Phase}: Fixing the appropriate initialization parameters achieved in training, we will obtain our CD-OSR recognition framework. Similar to the training phase, we model each known class data in \textbf{training set} as a group of the CD-OSR, while the whole testing set as one collective/batch\footnote{This kind of operation is completely for convenience. In fact, the size of batch does not significantly influence the classification performance, and the subsection 4.2.2 reports this result in detail.} is treated in the same way. Then all of the groups are co-clustered  under CD-OSR. After the co-clustering process, each class will obtain one or many subclasses. Note that the key to classification is whether the subclass assigned to the testing instance is included in the corresponding known class or not: if yes, the instance is labeled as the appropriate known class; otherwise it will be recognized as unknown class, as shown in Fig. 2. Furthermore, Algorithm 1 also shows the workflow of CD-OSR.
\begin{table}[!h]
\tabcolsep 0pt
\vspace*{-5pt}
\begin{center}
\def\temptablewidth{0.49\textwidth}
{\rule{\temptablewidth}{1.5pt}}
\begin{tabular*}{\temptablewidth}{@{\extracolsep{\fill}}l}
\bf Algorithm 1 CD-OSR  \\   \hline
{\bf Initializing} \\
\vspace*{0.1cm}
\hspace{0.2cm} 1. Let $\bm{X}_{tr}$ and $\bm{X}_{ts}$ respectively denote the Training Set and Testing \\
\hspace{0.5cm}    Set obtained by the experimental protocol in subsection 4.1.1.              \\
\vspace*{0.1cm}
\hspace{0.2cm} 2. According to the class labels, Divide the Training Set \\
\hspace{0.5cm}   $\bm{X}_{tr}=[\bm{X}_{tr1};\bm{X}_{tr2};...]$ sequentially class by class. \\
\vspace*{0.1cm}
\hspace{0.2cm} 3. Initializing parameters: $\mu_0,\Sigma_0,\beta,\nu,\varsigma,\alpha_0,\gamma,H,\varepsilon,T$, InS described \\
\hspace{0.5cm}    in the paper. \\
\vspace{0.1cm}
{\bf Co-clustering under the HDP framework} \\
\vspace*{0.1cm}
\hspace{0.2cm} Results = HDP($[\bm{X}_{tr};\bm{X}_{ts}]$, $\alpha_0$, $\gamma$, $H$, InS, $T$)  \\
\vspace{0.1cm}
{\bf Determining the subclasses} \\
\vspace*{0.1cm}
\hspace{0.2cm} After co-clustering, let $\varrho$ denote the proportion of the corresponding   \\
\hspace{0.2cm} subclass in its class: the subclass will be removed, if $\varrho<\varepsilon$.\\
\vspace{0.1cm}
{\bf Predicting} \\
\vspace*{0.1cm}
\hspace{0.2cm} For a new coming instance, it will be labeled according to the \\
\hspace{0.2cm} following rule:\\
\hspace{0.8cm} 1) a corresponding known class, if the subclass assigned to it \\
\hspace{1.1cm}    belongs to some known class.\\
\hspace{0.8cm} 2) an unknown class, if the subclass assigned to it comes from\\
\hspace{1.1cm}    a new draw from $H$.
\end{tabular*}
{\rule{\temptablewidth}{1pt}}
The HDP here is the software package from  \cite{Teh2006Hierarchical} used to implement the Hierarchical Dirichlet Process
\end{center}
\end{table}


Note that the testing phase is nothing but a co-clustering process, which seems to have the flavor of lazy learning to some extent. Furthermore, the collective/batch operation for the testing set makes our CD-OSR can address the instances \emph{in batch}, even \emph{individual} instances. Unlike existing OSR methods which infer unknown classes depending on the empirically-set decision threshold, our CD-OSR does not need to define such a threshold and can provide explicit modeling for the unknown classes appearing in testing. This naturally endows it new class discovery capability which will be detailed in subsection 4.3. Such a capability intuitively makes our CD-OSR have zero open space risk under ideal conditions where all classes including known and unknown classes are mutually exclusive. Moreover, under the CD-OSR framework, each new/unknown class will inherently have only one subclass as we have no available knowledge from unknown classes. Note that unlike the two-step manner in \cite{Bendale2015Towards,Shu2018Unseen}, CD-OSR actually is a jointly solving manner due to the co-clustering of HDP. In addition, the collective operation in CD-OSR also makes our framework consider the correlations among the testing instances obviously ignored by other existing OSR methods.

Besides, the key to accurate prediction of our CD-OSR is the sharing of subclasses between the testing set's group and the groups of the training set. However, the known classes of the training data may also share the same subclasses between themselves, resulting in an unidentifiable problem. Therefore, we usually set a lager $\gamma$ to decrease the degree to which the subclasses are shared between those classes. Intuitively, if all classes (including known and unknown classes) are mutually exclusive, the subclasses associated with the different classes would be different, making the input instances identifiable. Furthermore, we state the following proposition.
\newtheorem{proposition}{Proposition}
\begin{proposition}
Assume the set of potential classes, i.e., known and unknown, are mutually exclusive. Let $m_{\cdot k}$, $m_{\cdot\cdot}$, $\gamma$, $\phi_k$, and $H$ be described as above, and $L$ denote the number of subclasses associated with the corresponding known classes. Then our CD-OSR framework can model the subclasses of the new coming instances associated with the corresponding known classes with probability $\sum_{k=1}^L\frac{m_{\cdot k}}{m_{\cdot\cdot}+\gamma}\delta_{\phi_k}$ or unknown classes with probability $\frac{\gamma}{m_{\cdot\cdot}+\gamma}$, whilst it would have zero open space risk.
\end{proposition}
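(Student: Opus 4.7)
The plan is to specialise the HDP component-assignment formula (equation (9)) to the CD-OSR setup, where the testing batch constitutes one additional group alongside the $J$ known-class groups. For a new instance arriving in the testing restaurant, the subclass parameter $\psi$ assigned to its table is, after integrating out $G_0$, distributed according to $\sum_{k=1}^{K}\frac{m_{\cdot k}}{m_{\cdot\cdot}+\gamma}\delta_{\phi_k}+\frac{\gamma}{m_{\cdot\cdot}+\gamma}H$. First I would partition the $K$ existing dishes into two index sets: those with indices $k=1,\ldots,L$ that have already been served to at least one table in a known-class restaurant (the ``known-class dishes''), and the remaining indices $k=L+1,\ldots,K$, which in the CD-OSR convention are dishes that exist only in the testing group and therefore are carried by the $\gamma/(m_{\cdot\cdot}+\gamma)$ new-draw budget spent earlier within the same testing batch.

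Second, the mutual-exclusivity hypothesis is used to identify dishes with classes one-to-one. Because the class-conditional distributions $F(\phi_k)$ of known and unknown classes have disjoint supports, a dish served only to known-class customers cannot plausibly have been drawn by an unknown-class instance (its data likelihood vanishes outside the known-class support), and, conversely, a fresh dish drawn from $H$ that first appears in the testing batch cannot coincide with any $\phi_k$ associated to a known class. Restricting the first sum in (9) to the first $L$ indices then gives exactly the claimed measure $\sum_{k=1}^{L}\frac{m_{\cdot k}}{m_{\cdot\cdot}+\gamma}\delta_{\phi_k}$ over known-class subclasses, while the residual mass $\gamma/(m_{\cdot\cdot}+\gamma)$ attached to $H$ is the probability of sampling a brand-new subclass which, under exclusivity, must correspond to an unknown class.

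Third, for the zero-open-space-risk claim I would return to the definition of $R_{\mathcal{O}}$ recalled in Section~2: it measures positively labelled mass lying outside the support of the known-class data. By the classification rule of Algorithm~1, an input is labelled with a known class only if its table inherits one of the $L$ known-class dishes $\phi_1,\ldots,\phi_L$. Under mutual exclusivity the support of each $F(\phi_k)$ (for $k\le L$) is contained in the support of its own known class, so the union of positively labelled regions has no Lebesgue measure outside the known-class supports; every other input is routed to the ``new draw from $H$'' branch and labelled as unknown. Hence no positively labelled open space remains and $R_{\mathcal{O}}=0$.

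The main obstacle, I expect, will be the last step: equation (9) is a prior statement about $\psi_{jt}$ before the data likelihood is folded in, whereas the actual CD-OSR assignment is governed by the posterior rules (10)--(11) in which the weights are reshaped by $f_{k}^{-x_{ji}}(x_{ji})$. To make the zero-risk conclusion rigorous one has to argue that, under mutual exclusivity, the likelihood factor $f_k^{-x_{ji}}$ for a known-class dish $\phi_k$ evaluated at an unknown-class instance is (essentially) zero, so the effective posterior assignment rule coincides with the idealised prior statement of the proposition. I would therefore devote the bulk of the proof to this support/likelihood argument, and treat the combinatorial rewriting of (9) as an immediate corollary.
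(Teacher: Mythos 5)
Your proposal is correct and follows essentially the same route as the paper: the paper's entire proof is the one-line remark that the claim ``can be obviously obtained from the generative process of HDP,'' i.e., by reading the two probability masses directly off the Chinese-restaurant-franchise predictive distribution in equation~(9) and invoking mutual exclusivity for the zero-risk claim, which is exactly what you spell out (in considerably more detail, including the prior-versus-posterior caveat that the paper silently skips).
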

\begin{proof}
This proposition can be obviously obtained from the generative process of HDP.
\end{proof}

\subsubsection{Computational Complexity Analysis}
Using the Gaussian-Wishart distribution (the details c.f. subsection 4.1.2) as the base distribution $H$, CD-OSR adopts the Gibbs sampling scheme developed by \cite{Teh2006Hierarchical} to implement the inference process. In this scheme, the only computations needed are \emph{marginal likelihood} (ML) computations, and \emph{posterior} (PO) computations for the parameters in the Gaussian-Wishart distribution\footnote{See \underline{http://www.stats.ox.ac.uk/$\sim$teh/notes.html} for more details about the derivations of \emph{marginal likelihood} and \emph{posterior} computations.}. Thus its computational complexity is mainly determined by two parts. One part needs no repeated updates, which contains the ratio of Gamma terms with $O(dN)$ ($N$ is the total number of the instances) in ML and the prior of covariance matrix with $O(d^3)$ in both ML and PO, the other is the posterior update of the covariance matrix with $O(d^2T)$ ($T$ is the number of iterations) in both ML and PO. Further considering the number of mixture component $K$, the number of groups/classes $J$, the total computational complexity of CD-OSR is roughly about $O(d^3+dN+d^2TKJ)$. Note that $K$ is changing in each iteration.

\section{Experimental Evaluation}
To verify the effectiveness of our CD-OSR framework, we carry out several experiments on the benchmark datasets commonly used in OSR scenario, including LETTER \cite{frey1991letter}, USPS \cite{hull1994database}, PENDIGITS \cite{bilenko2004integrating}, COIL20\cite{nene1996columbia}, and Extended Yale B \cite{georghiades2001few}. As an initial solution towards collective decision for open set recognition, we compare our CD-OSR with the mainstream OSR methods, including the 1-vs-Set machine, W-OSVM\footnote{W-OSVM is the W-SVM model which only uses the one-class SVM CAP model.}, W-SVM, $P_{I}$-SVM and OSNN. Note that the W-SVM and $P_{I}$-SVM are the currently popular algorithms.

Here we mainly focus on the comparisons of the F-measure among these methods since it better emphasizes the distinction between correct positive and negative classifications \cite{Scheirer2014Probability}. The F-measure is defined as a harmonic mean of Precision and Recall
\begin{equation*}
\text{F-measure} = 2\cdot\frac{\text{Precision}\cdot\text{Recall}}{\text{Precision}+\text{Recall}},
\end{equation*}
where
\begin{equation*}
\text{Precision} = \frac{\text{TP}}{\text{TP}+\text{FP}}
\end{equation*}
and
\begin{equation*}
\text{Recall} = \frac{\text{TP}}{\text{TP}+\text{FN}}.
\end{equation*}
TP, FN and FP respectively represent true positive, false negative and false positive of known classes. Note that although the computations of Precision and Recall are only for available known classes, the FN and FP actually also consider the false unknown classes and false known classes by taking the false negative and the false positive into account \cite{junior2017nearest}. Concretely, we use the micro-F-measure \cite{junior2017nearest} as an evaluation metric. The higher the micro-F-measure, the better the performance of an OSR algorithm. For comparison, we also give the recognition accuracy for these algorithms in the Supplementary Material.


In addition, the experimental setups including the experimental protocol and the parameter setting are given in subsection 4.1. subsection 4.2 presents the main experimental results, while the new class discovery function is reported in subsection 4.3.

\subsection{Experimental setup}
\subsubsection{Experimental protocol}
\begin{figure}[!t]
\centering
\includegraphics[width=8cm]{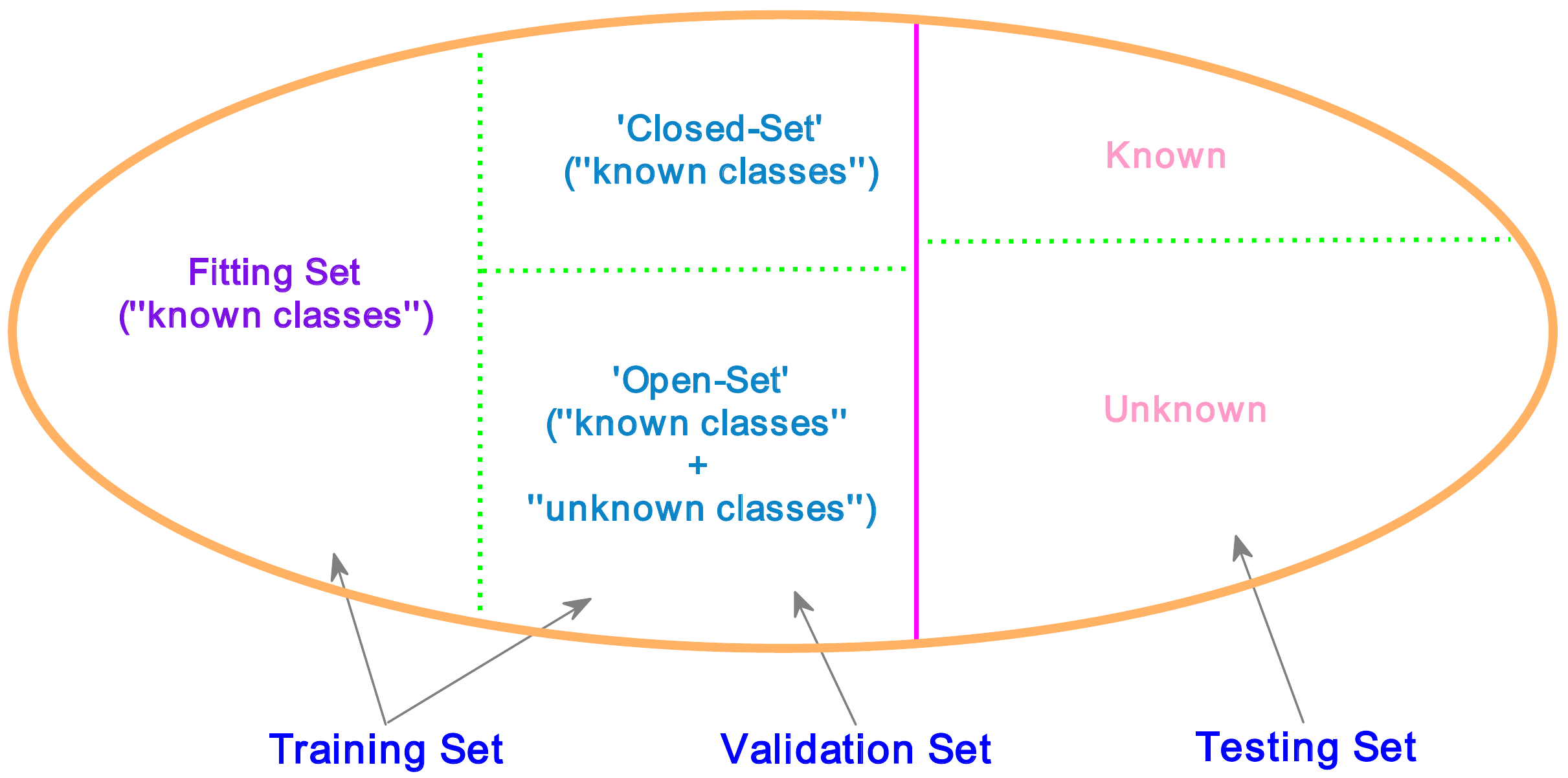}
\caption{Data partitioning. The dataset is first divided into training and testing sets, then the training set is further divided into a fitting set and a validation set containing a 'closed set' simulation and a 'open set' simulation.}
\end{figure}
As described in Section 1, the selection of suitable thresholds for the corresponding OSR methods is difficult and risky due to lacking available information from unknown classes. To mitigate this challenge, similar to \cite{junior2017nearest}, a parameter optimization phase adapted to the OSR scenario is performed to find the better parameters for all methods in this paper. Note that the optimal parameter values are selected based on the tradeoff on F-measure between the simulations of 'Closed-Set' and 'Open-Set' scenarios built in the validation set.

As shown in Fig. 3, the dataset is first divided into training set owning known classes and testing set including known and unknown classes, respectively. Among the classes occurring in training set, half are chosen to act as ''known'' classes in the simulation, the other half as ''unknown'' in the simulation. Thus the training set is divided into a fitting set $\mathcal{F}$ just containing the ''known'' classes and a validation set $\mathcal{V}$ including a  'Closed-Set' simulation and an 'Open-Set' simulation. The 'Closed-Set' simulation only owns the ''known'' classes, while the 'Open-Set' simulation contains all the classes appearing in the training set. Note that in the training phase, all the methods are trained with $\mathcal{F}$ and evaluated on $\mathcal{V}$. Additionally, we give the following experimental protocol. For each experiment in this paper, we
\begin{enumerate}[\IEEEsetlabelwidth{8)}]
\item [1.] randomly select $\Omega$ available classes as known classes for training from the dataset;
\item [2.] randomly choose 60\% of the instances in each of the $\Omega$ selected classes as training set;
\item [3.] select the remaining 40\% of the instances from step 2 and the instances from other classes excluding the $\Omega$ classes as testing set;
\item [4.] randomly select $[(\Omega/2+0.5)]$ classes as ''known'' classes for fitting from the training set, while the remaining classes as ''unknown'' classes for evaluating;
\item [5.] randomly choose 60\% of instances from each ''known'' classes of the training set as fitting instances in $\mathcal{F}$;
\item [6.] select the remaining 40\% of the instances from step 5 as the 'Closed-Set' simulation, while the remaining 40\% of the instances from step 5 and the ones from ''unknown'' classes  in training set as the 'Open-Set' simulation;
\item [7.] train all the models with $\mathcal{F}$ and evaluate them on $\mathcal{V}$, then find the suitable parameters;
\item [8.] evaluate the performance for all the methods with 10 randomized training and testing sets after the parameters of corresponding models are determined.
\end{enumerate}
\textbf{Remark:} while several different randomness in the experiments (e.g., the Gibbs sampling during the inference process, the random division for the dataset and so on), the experimental results in this paper are from the repetition of multiple evaluations based on the corresponding random division for the dataset.

\subsubsection{Parameter setting}
This part details the parameter setting for all of the methods used in this paper. For the 1-vs-Set machine, we use the default setting in the code provided by the authors. For W-OSVM and W-SVM adopting one-vs-rest approach, we fix the threshold $\rho_\tau$ for the one-class SVM CAP model in 0.001 as specified by the authors, while a grid search in $\{10^{-7},10^{-6},...,10^{-1}\}$ is performed for threshold $\rho_R$. Similar to W-SVM, $P_I$-SVM also uses the one-vs-rest approach. Accordingly, a grid search in $\{10^{-7},10^{-6},...,10^{-1}\}$ is  performed for threshold $\rho$ in $P_I$-SVM. As for the related SVM parameters including the W-OSVM, W-SVM and $P_I$-SVM, we perform grid search for $C\in\{2^{-5},2^{-4},...,2^{5}\}$ and $g\ (\text{gamma})\in\{2^{-8},2^{-7},...,2^{3}\}$. Furthermore, the implementation codes including 1-vs-Set machine, W-OSVM, W-SVM and $P_I$-SVM can be found at \emph{https://github.com/ljain2/libsvm-openset}. For OSNN, only the threshold $\sigma$ needs to be optimized, and we adopt the same strategy described in \cite{junior2017nearest}. Please note that in this paper, once the thresholds of these methods are determined in training, their values will no longer change in testing, since we usually know nothing about unknown classes.

For CD-OSR, we have two learning phase. In the training phase, our goal is to get the the appropriate initialization parameters. Towards this goal, we model each known class in the fitting set $\mathcal{F}$ using the Gaussian mixture model. Each component in the mixture model is associated with a Gaussian distribution with the mean vector $\mu_{jt}$ and  covariance matrix $\Sigma_{jt}$, i.e., $\psi_{jt}=\{\mu_{jt},\Sigma_{jt}\}$. For the base distribution $H$, we define a conjugate prior, i.e., Gaussian-Wishart distribution
\begin{equation}
H = p(\mu,\Sigma|\mu_0,\beta,\Sigma_0,\nu) = \mathcal{N}(\mu|\mu_0,(\beta\Sigma)^{-1})\mathcal{W}(\Sigma|\Sigma_0,\nu),
\end{equation}
where $\mu_0$ is the prior mean, $\beta$ is a scaling constant controlling the deviation of the mean vectors of mixture components from the prior mean. $\Sigma_0$ denotes the prior covariance matrix, and $\nu$ is the number of \emph{degrees of freedom} of the distribution. In order to confirm the validity of our learning framework, we do not take an overly complicated means to select the initialization parameters in the CD-OSR. In contrast, we here let $\mu_0$ simply equal the mean of all the instances in $\mathcal{F}$\footnote{In testing, $\mu_0$ is simply set to the mean of all the instances in Training set.}, $\beta$ equal 1, and $\nu$ be selected by performing a
grid search from the set $\{d,d+1,...,d+20\}$. Furthermore, $\Sigma_0$ is set as follows
\begin{equation}
\Sigma_0 = \varsigma\times\frac{\sum_{j=1}^{J-1}(n_j-1)\Sigma_j}{n-(J-1)}
\end{equation}
where the $\varsigma$ is a scaling constant and also obtained by performing a grid search from the set $\{0.00001,0.0001,0.001,0.01,0.1,0.2,...,1\}$. $J-1$ represents the number of known classes in $\mathcal{F}$\footnote{There are a total of $J$ groups under the CD-OSR framework, where the former $J-1$ groups represent the known classes in $\mathcal{F}$ and the $J$-th group represents the  validation set $\mathcal{V}$.}. $n$ is the total number of the instances in $\mathcal{F}$. The second term on the right side of (13) denotes the common pooled covariance matrix of the known classes \cite{greene1989partially}.  Moreover, for the base distributions $H$ and $G$, the concentration parameters are given by the vague gamma priors \cite{Escobar1995Bayesian}. Specifically, we set $\gamma\sim\text{Gamma}(100,1)$ and $\alpha_0\sim\text{Gamma}(10,1)$ to ensure enough subclasses used to represent each known class, while reducing the sharing of subclasses between the different known classes. The maximum number of iterations of CD-OSR ($T$) is set to 30, while the initial number of mixture components (InS) is set to 30. Additionally, the $\varepsilon$ is empirically set to $0.01$, where subsection 4.2.3 details the reason of this setting.

After the training phase, we will obtain the appropriate values of initialization parameters for CD-OSR. Fixing these parameters' values, we only need to respectively replace fitting set $\mathcal{F}$ and validation $\mathcal{V}$ with the training set and testing set, then repeat 10 rounds of the co-clustering process to obtain the final experimental evaluation.

\subsection{Performance Evaluation}
This subsection mainly contains three parts. subsection 4.2.1 reports the F-measure comparisons among our CD-OSR with the 1-vs-Set, W-OSVM, W-SVM, $P_I$-SVM, and OSNN. subsection 4.2.2 shows the influence of the batch size of the testing data on performance, while the influence on parameter $\varepsilon$ is discussed in subsection 4.2.3.

\subsubsection{Comparisons on F-measure}
\textbf{Results on LETTER}: The LETTER dataset has a total of 20000 instances from 26 classes, where every instance owns 16 features.  To recast Letter dataset as a dataset for open set problem, we randomly select 10 available classes as known classes for training, and vary openness by adding a subset of the remaining 16 classes. Fig. 4 shows the average F-measure results on this dataset. With the openness less than about 12\%, the performance of our CD-OSR is comparable to the W-SVM and $P_I$-SVM. However, it is almost significantly higher than the other five methods used in this paper when the openness is larger than about 12\%. Furthermore, the changing trend of F-measure in CD-OSR is also relatively stable when varying the openness.

\begin{figure}[ht]
\centering
\includegraphics[width=8.8cm]{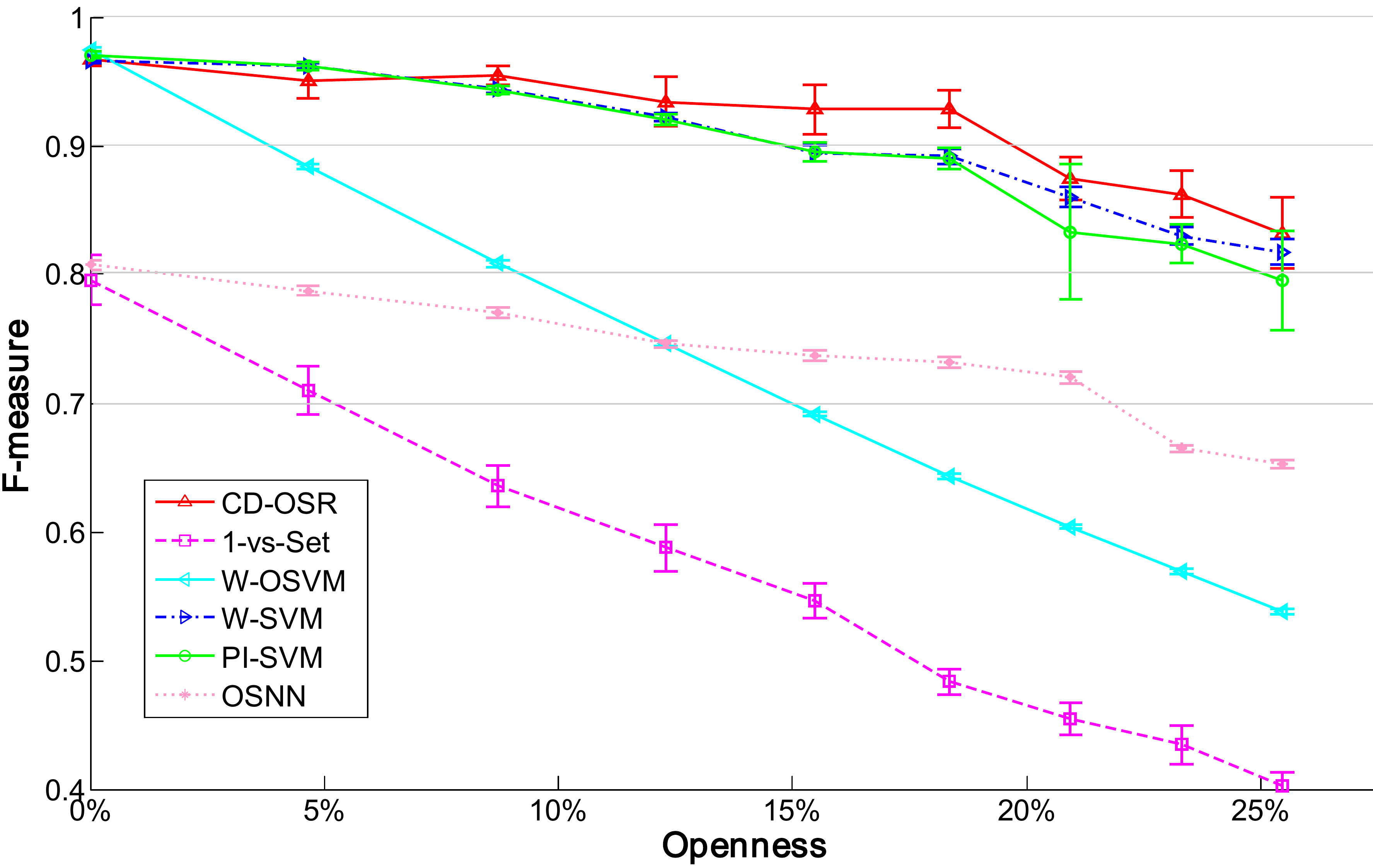}
\caption{F-measure for multi-class open set recognition on LETTER dataset. Error bars reflect the standard deviation.}
\end{figure}

\begin{figure}[ht]
\centering
\includegraphics[width=8.8cm]{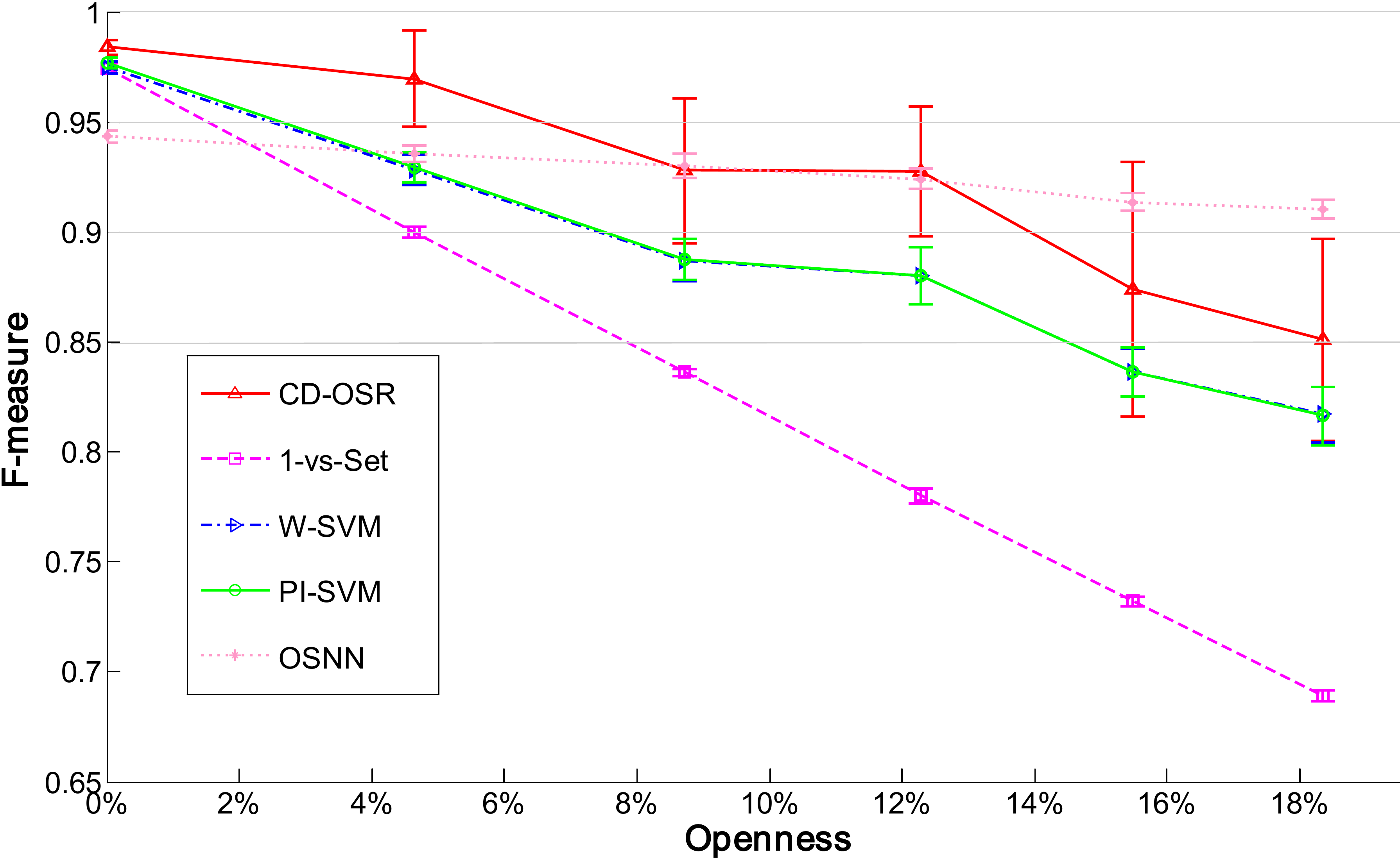}
\caption{F-measure for multi-class open set recognition on USPS dataset, and the performance of W-OSVM is not shown due to its poor F-measure. Error bars reflect the standard deviation.}
\end{figure}

\begin{figure}[ht]
\centering
\includegraphics[width=8.8cm]{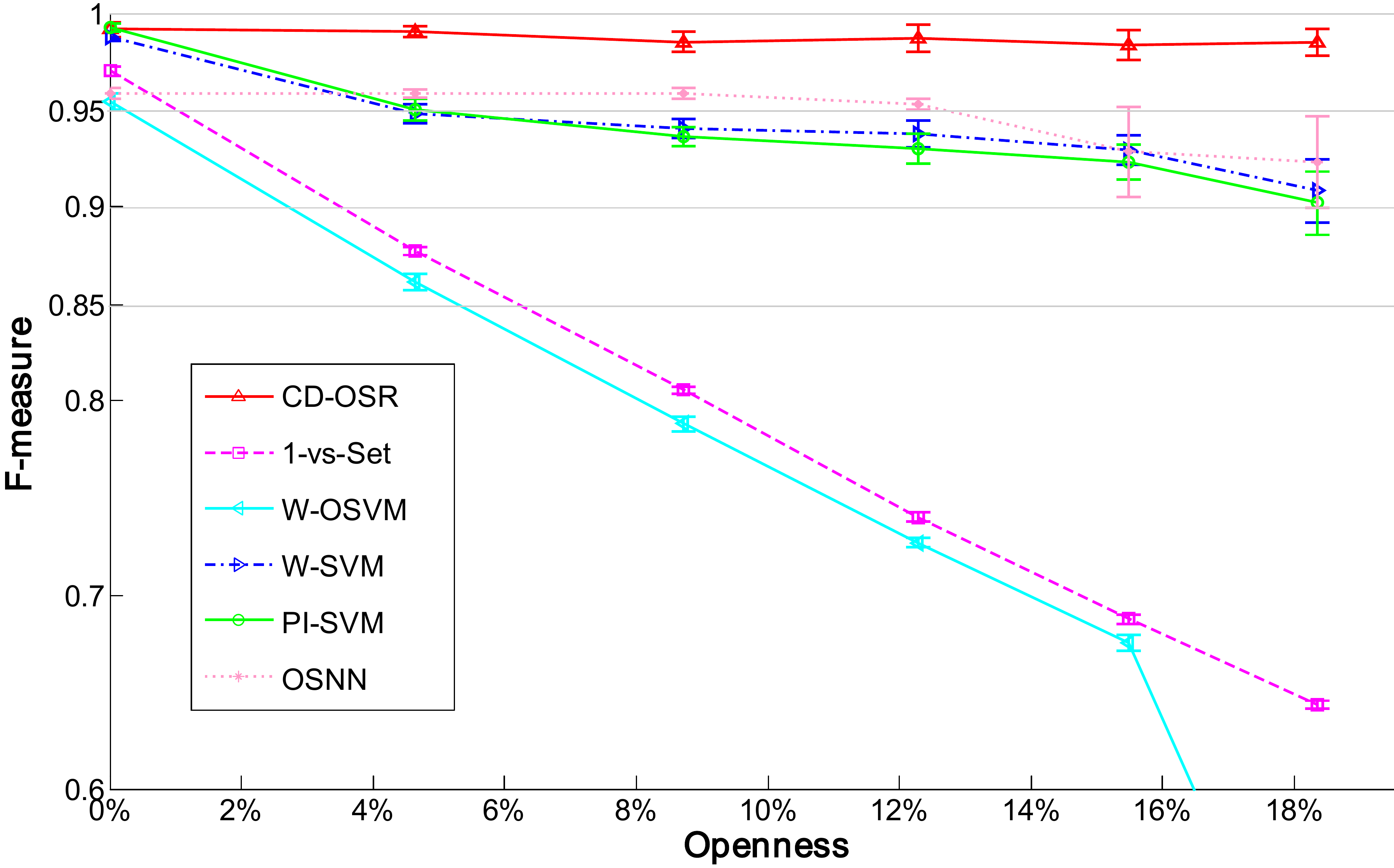}
\caption{F-measure for multi-class open set recognition on PENDIGITS dataset. Error bars reflect the standard deviation.}
\end{figure}

\textbf{Results on USPS}: The USPS dataset has a total of 7291 instances from 10 classes, where every instance owns 256 features. In this paper, principal component analysis (PCA) is used to project instance space into 39 dimensional subspace, retaining 95\% of the instances' information. Similar to the operation of LETTER dataset, we randomly select 5 available classes as known classes for training, and vary openness by adding a subset of the remaining 5 classes. Fig. 5 shows the average F-measure results on this dataset. As can be seen from Fig. 5, our CD-OSR obtains much higher performance than the 1-vs-Set, W-SVM and $P_I$-SVM with increasing the openness. Although the OSNN obtains the higher performance than the CD-OSR when the openness is larger than about 12\%, its performance is much lower than our method when openness is less than 8\%, especially when the openness equals zero. Furthermore, compared to the other methods, the changing trend of F-measure in OSNN is most stable, followed by our CD-OSR, W-SVM and $P_I$-SVM. Note that the performance of W-OSVM is not shown in Fig. 5 due to its poor F-measure.

\textbf{Results on PENDIGITS}: The PENDIGITS dataset has a total of 10992 instances from 10 classes, where every instance owns 16 features. Similar to the operation of USPS dataset, we randomly select 5 available classes as known classes for training, and vary openness by adding a subset of the remaining 5 classes. Fig. 6 shows the average F-measure results on this dataset. As can be seen from Fig. 6, our CD-OSR obtains much higher performance than the other methods as the openness increases. Simultaneously, the performance of our CD-OSR is almost unchanged when varying the openness.

\begin{figure}[ht]
\centering
\includegraphics[width=8.8cm]{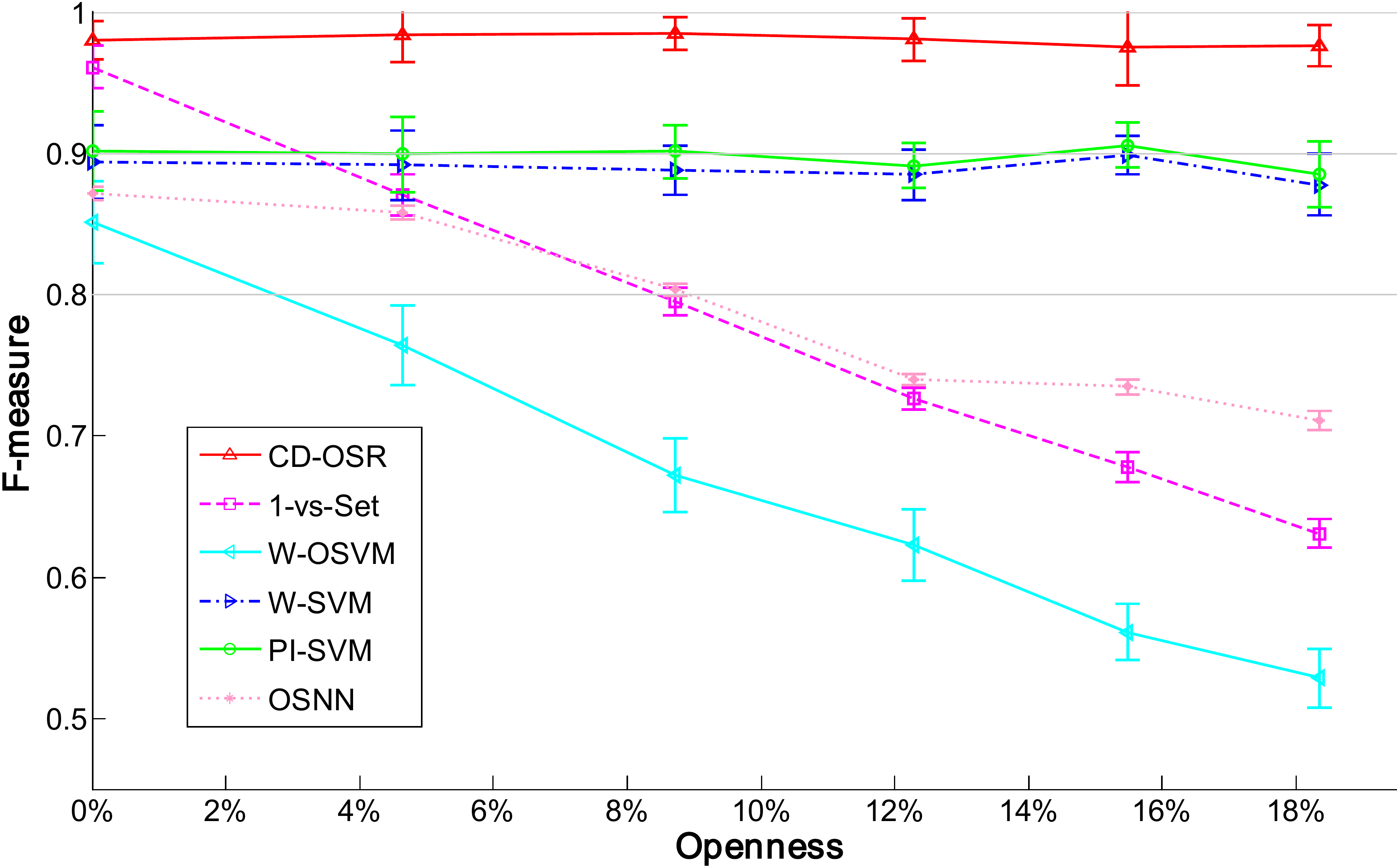}
\caption{F-measure for multi-class open set recognition on COIL20 dataset. Error bars reflect the standard deviation.}
\end{figure}

\begin{figure}[ht]
\centering
\includegraphics[width=8.8cm]{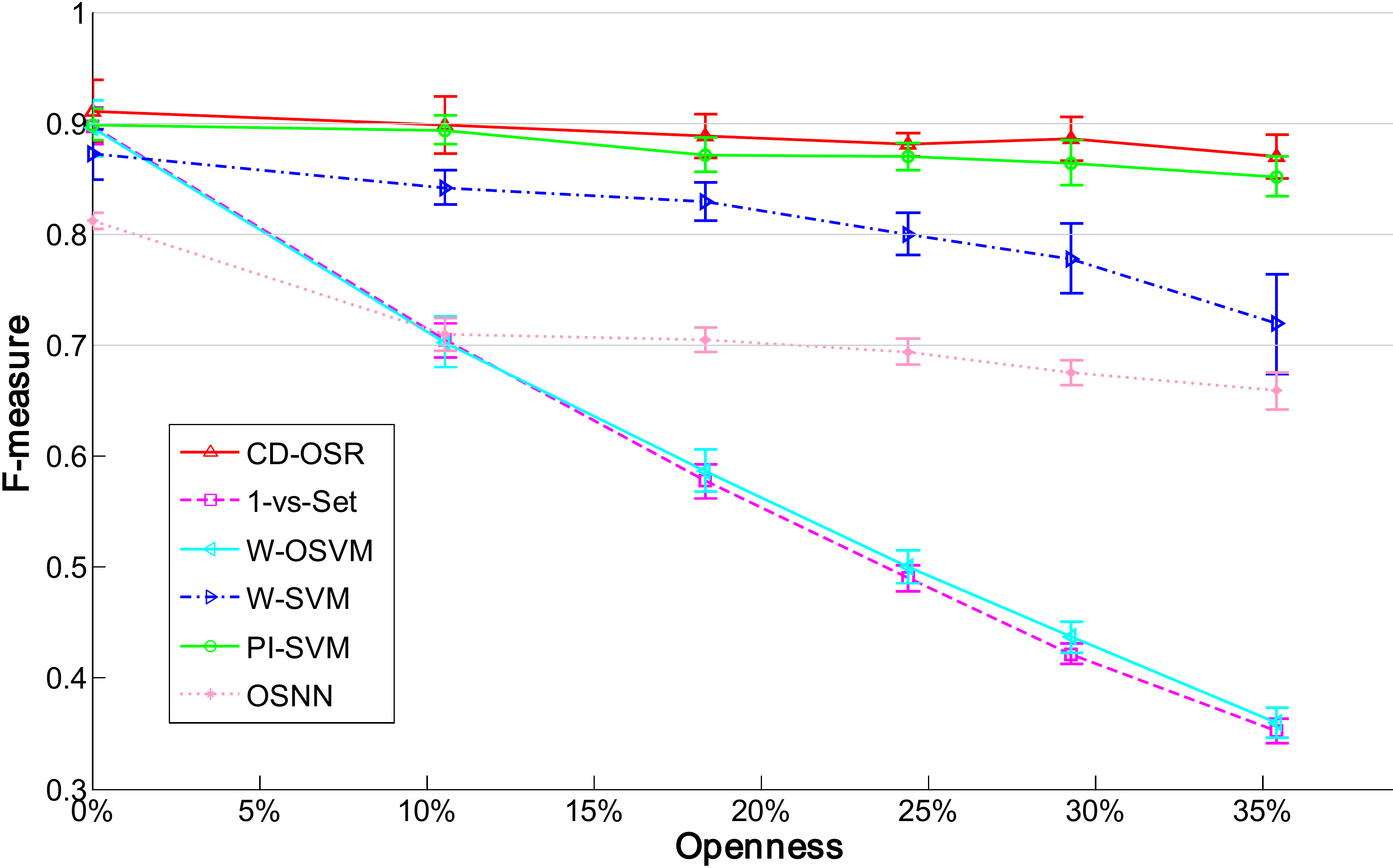}
\caption{F-measure for multi-class open set recognition on Extended Yale B dataset. Error bars reflect the standard deviation.}
\end{figure}

\textbf{Results on COIL20}: The COIL20 dataset has a total of 1440 gray images from 20 objects [48]. Each image is down-sampled to $16\times16$, i.e., the input dimension is 256. We further reduce it to 55 dimensions (retaining 95\% of the instances' information) using PCA technique. Then we randomly select 10 objects as known classes, and vary openness by adding a subset of the remaining 10 objects. As shown in Fig. 7, our CD-OSR is significantly better than the other five algorithms: CD-OSR not only achieves the best F-measure performance but also has the better stability with increasing openness.

\textbf{Results on Extended Yale B}: The Extended Yale B (YALEB) dataset has a total of 2414 frontal-face images from 38 individuals. Each class has around 64 images. The images are cropped and normalized to $32\times32$. Similar to COIL20, we reduce the input dimension to 69 using PCA. Fig. 8 shows the F-measure performance on this dataset. As can been seen from Fig. 8, both our CD-OSR and $P_I$-SVM gain significant advantages over other methods. Furthermore, with the openness increasing (about openness $>15\%$), CD-OSR starts to be slightly better than $P_I$-SVM. Besides, though W-SVM and OSNN also have good stability with increasing openness, their F-measure performances are not satisfactory.

\textbf{Remark:} From the experimental results reported above,  we can find that the classification performance of our CD-OSR is significantly improved compared to other existing OSR algorithms. However, what we still want to emphasize is that the CD-OSR currently does not make full use of the information from the known class labels. More precisely, it just uses this kind of information to assign the training data to different groups, while the discriminative information from these labels actually is not fully utilized. Nevertheless, CD-OSR still achieves at least comparable classification performance than other existing OSR methods making full use of label information like W-SVM, $P_I$-SVM, and so forth. Additionally, the above experimental results are also shown in tables of the Supplementary Material to further demonstrate the superiority of our CD-OSR.

\begin{figure}[!t]
\centering
\subfloat[]{\includegraphics[width=1.77in]{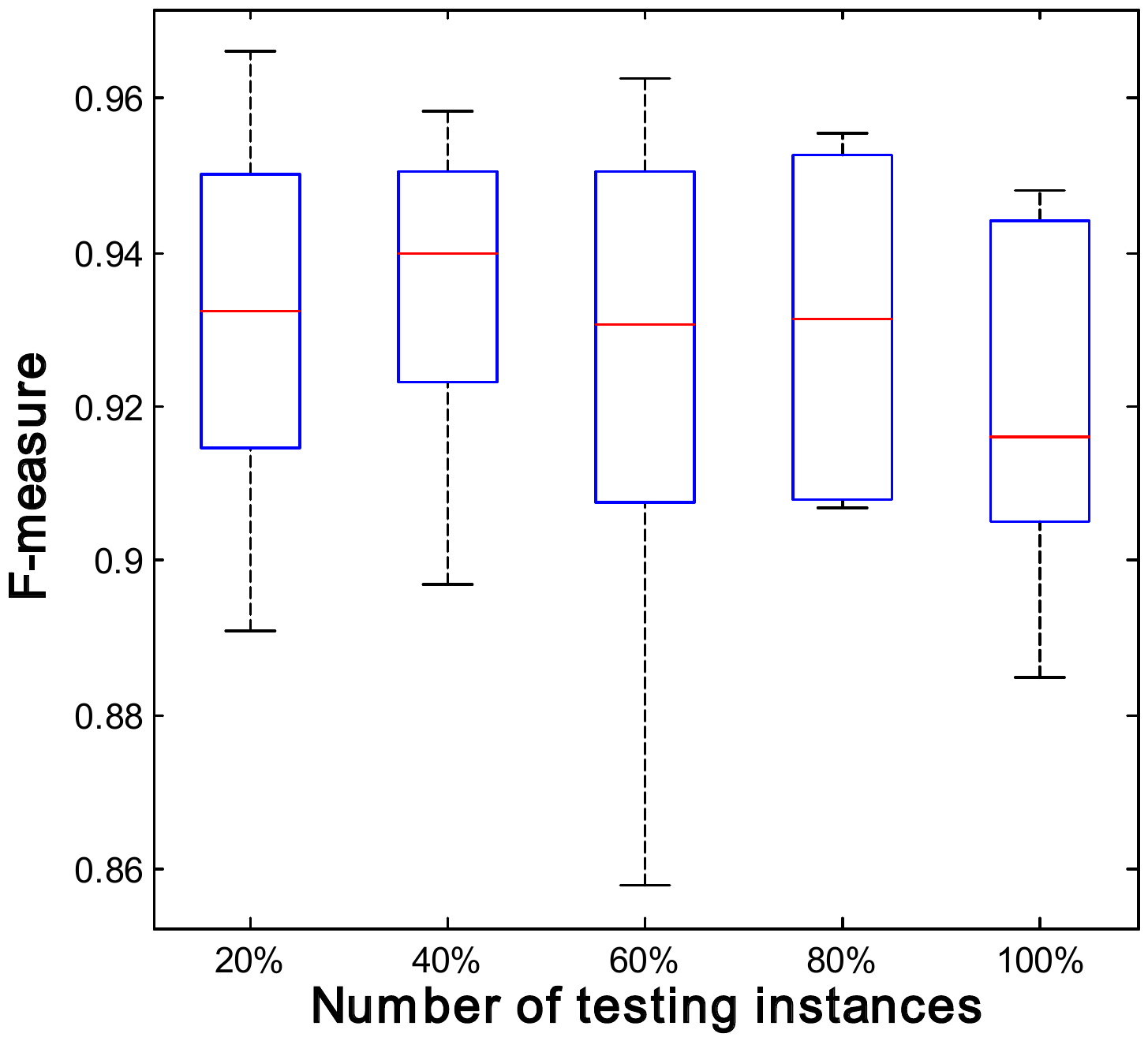}%
\label{fig_first_case}}
\hfil
\subfloat[]{\includegraphics[width=1.72in]{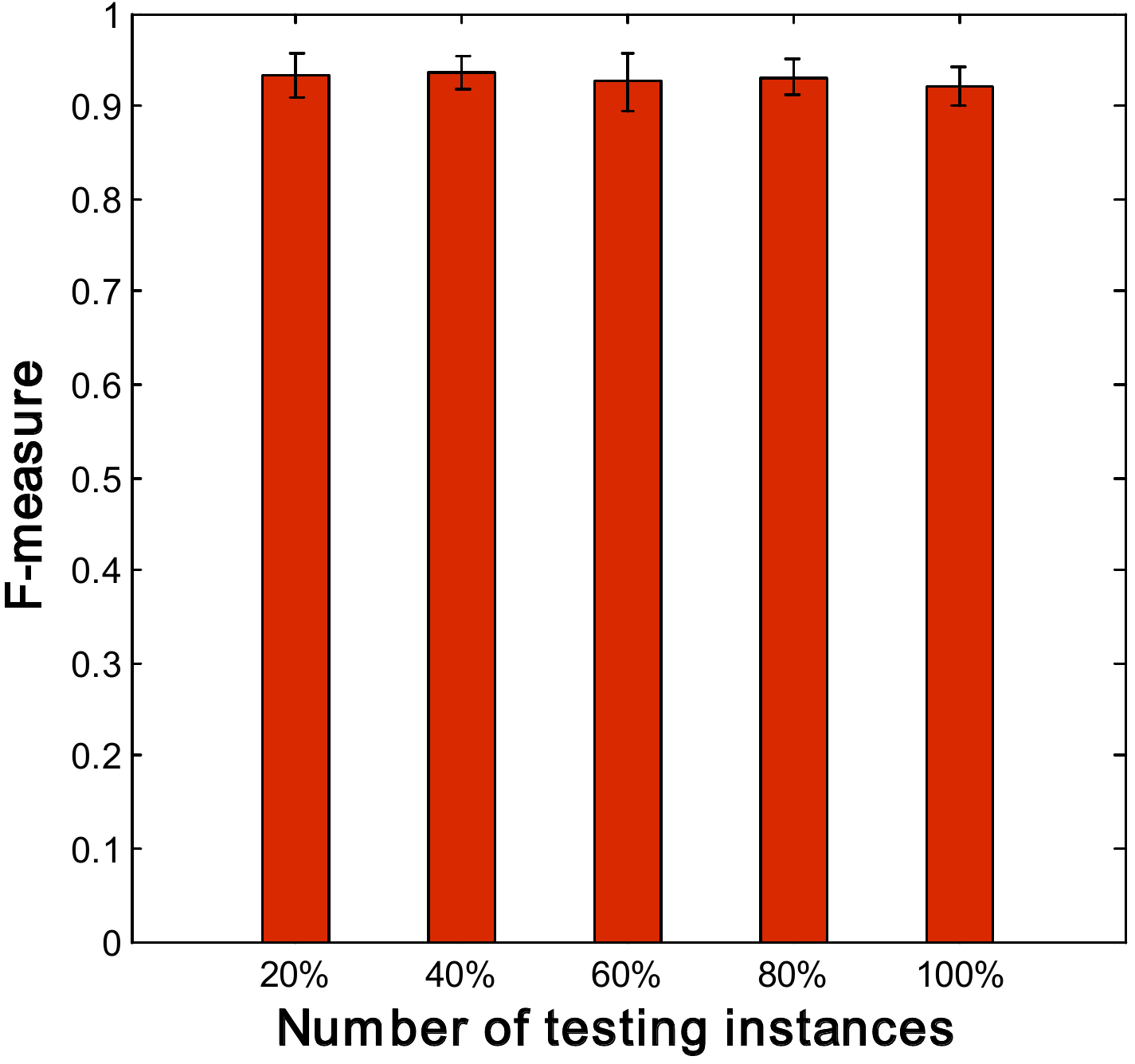}%
\label{fig_second_case}}
\caption{The F-measure on LETTER  dataset when openness = 18.35\%. (a) denotes the boxplot graph for the different number of testing instances, while (b) represents the corresponding errorbar graph, where error bars reflect the standard deviation.}
\label{fig_sim}
\end{figure}

\begin{figure}[!t]
\centering
\subfloat[]{\includegraphics[width=1.77in]{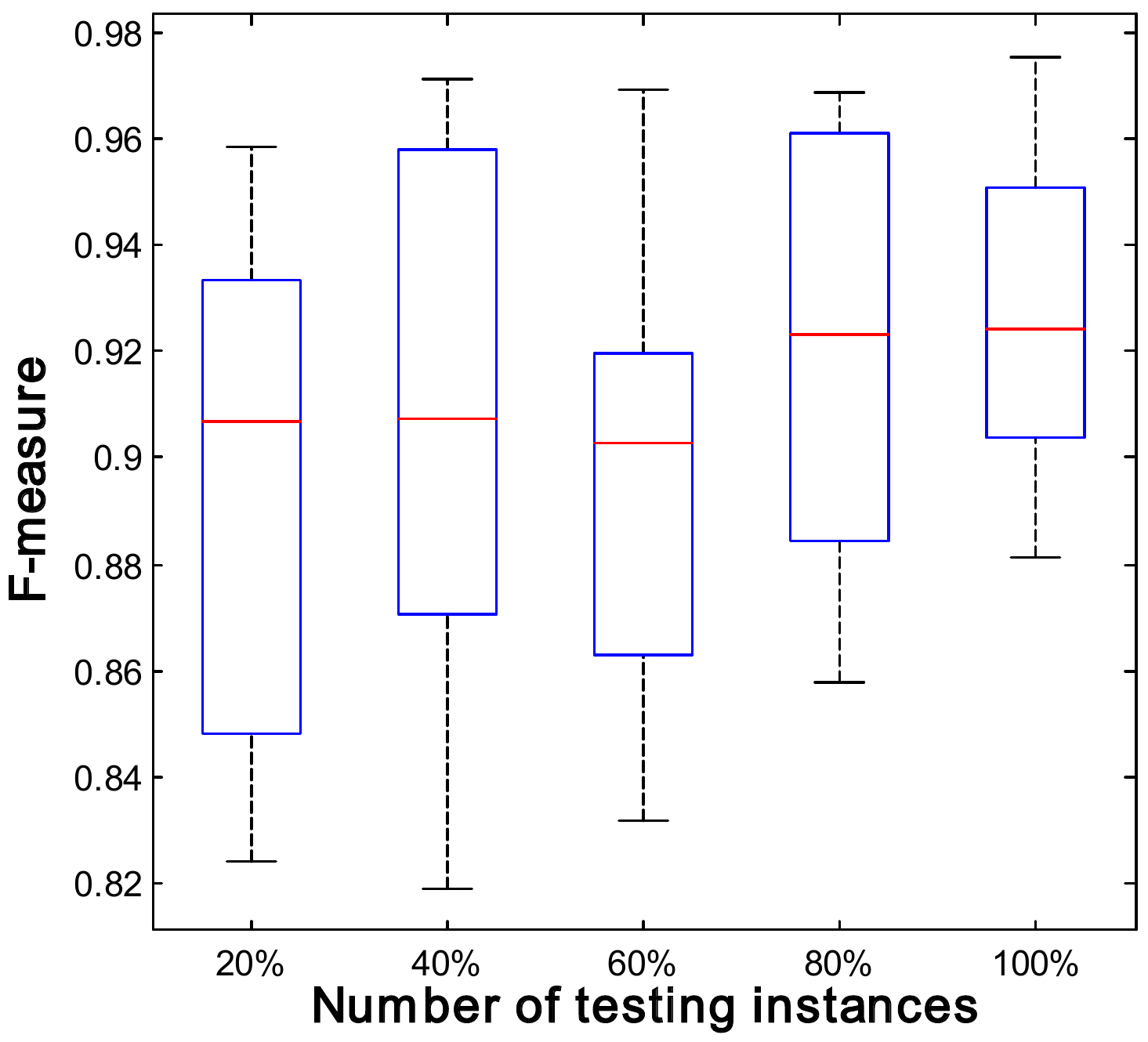}%
\label{fig_first_case}}
\hfil
\subfloat[]{\includegraphics[width=1.72in]{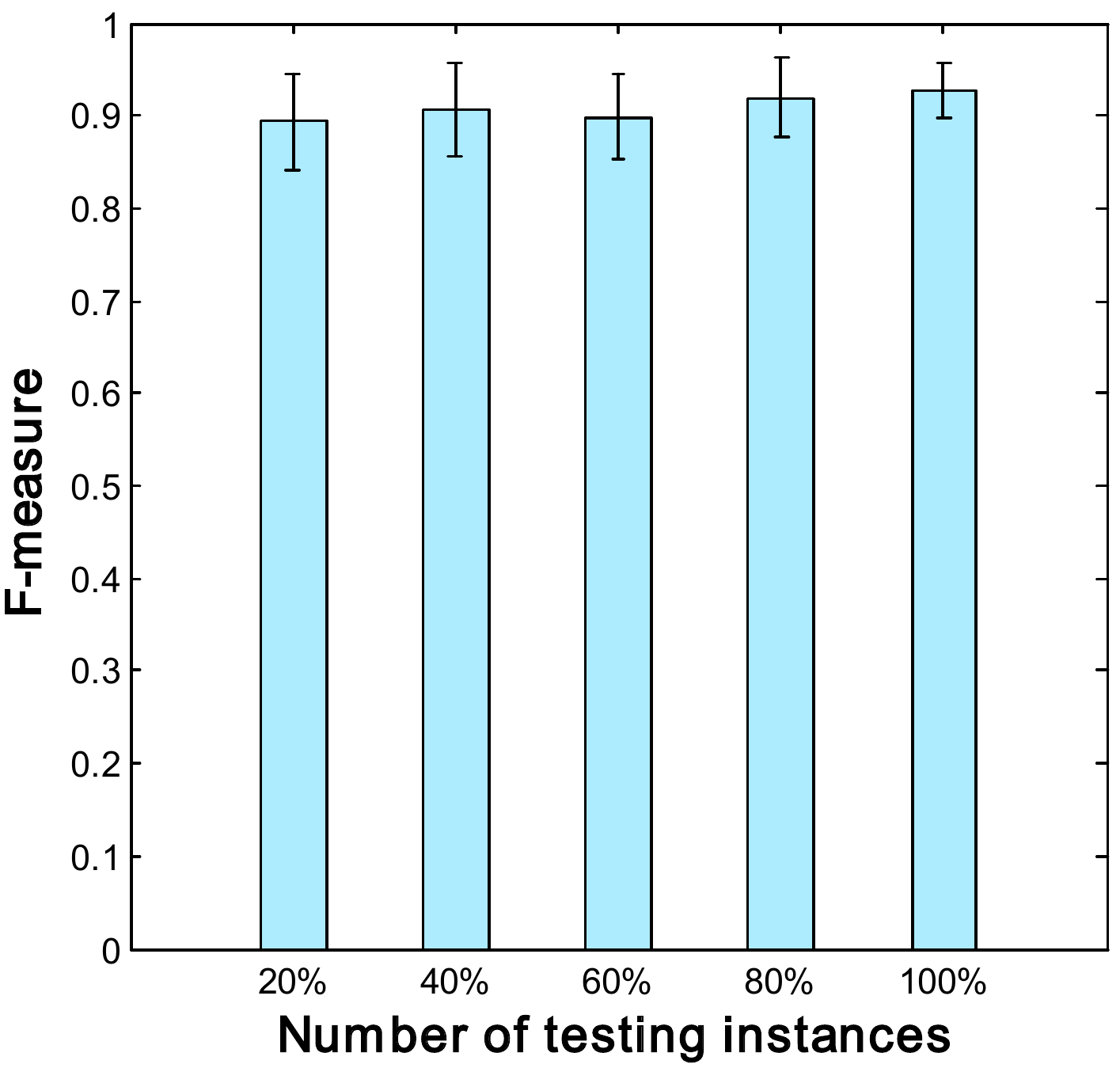}%
\label{fig_second_case}}
\caption{The F-measure on USPS dataset when openness = 12.29\%. (a) denotes the boxplot graph for the different number of testing instances, while (b) represents the corresponding errorbar graph, where error bars reflect the standard deviation.}
\label{fig_sim}
\end{figure}

\begin{figure}[!t]
\centering
\subfloat[]{\includegraphics[width=1.77in]{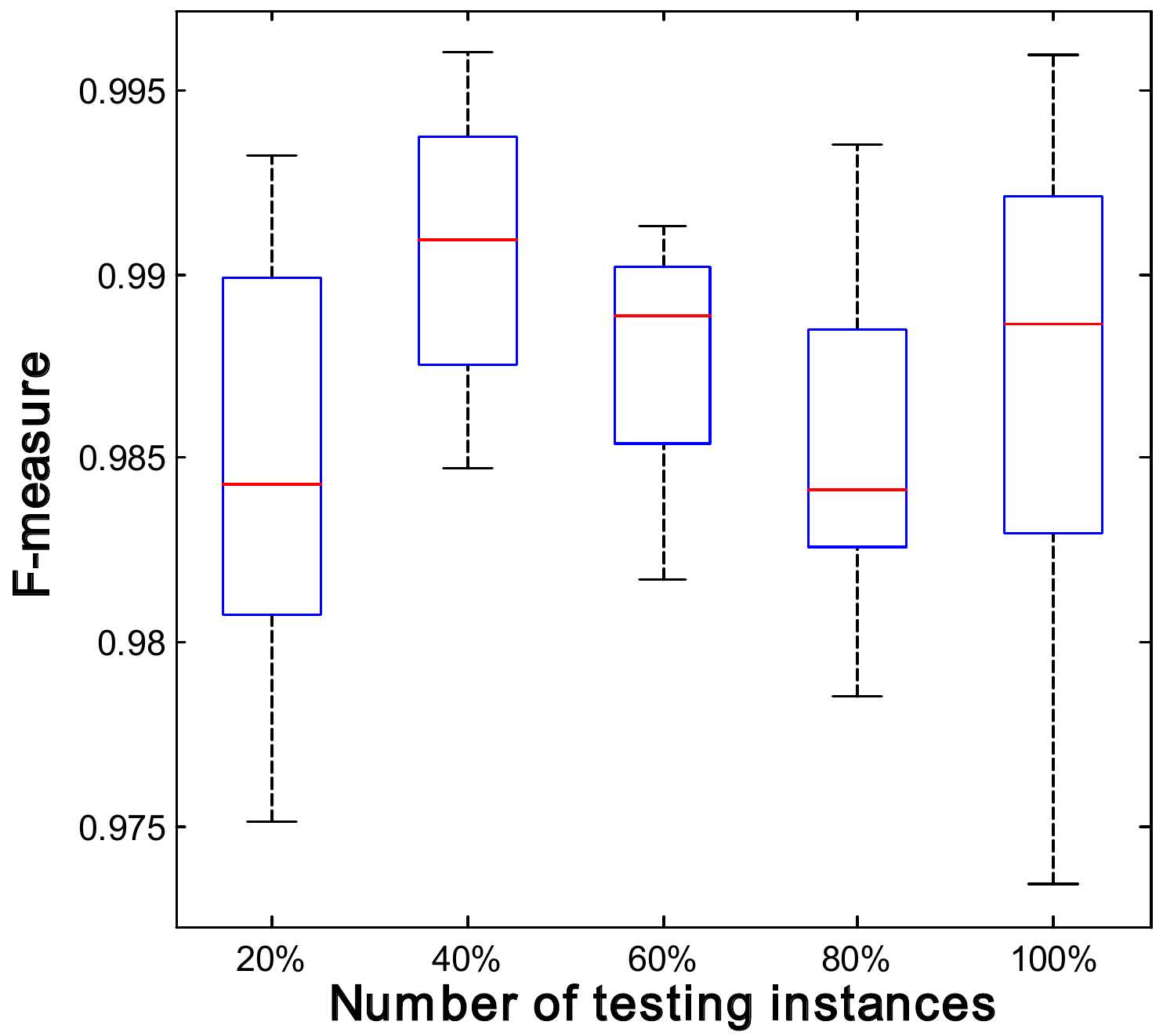}%
\label{fig_first_case}}
\hfil
\subfloat[]{\includegraphics[width=1.72in]{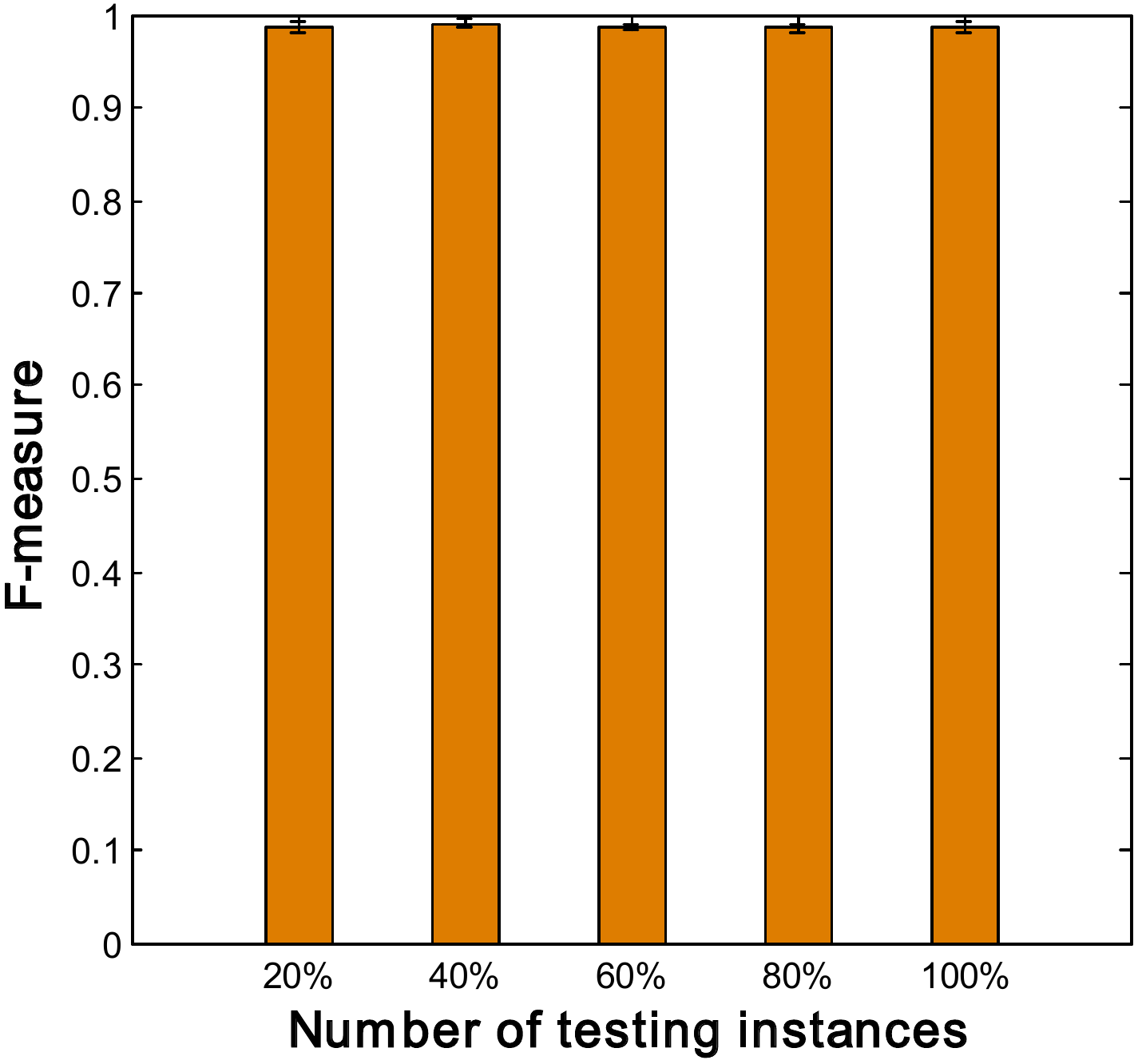}%
\label{fig_second_case}}
\caption{The F-measure on PENDIGITS dataset when openness = 12.29\%. (a) denotes the boxplot graph for the different number of testing instances, while (b) represents the corresponding errorbar graph, where error bars reflect the standard deviation.}
\label{fig_sim}
\end{figure}

\begin{figure}[!t]
\centering
\subfloat[]{\includegraphics[width=1.67in]{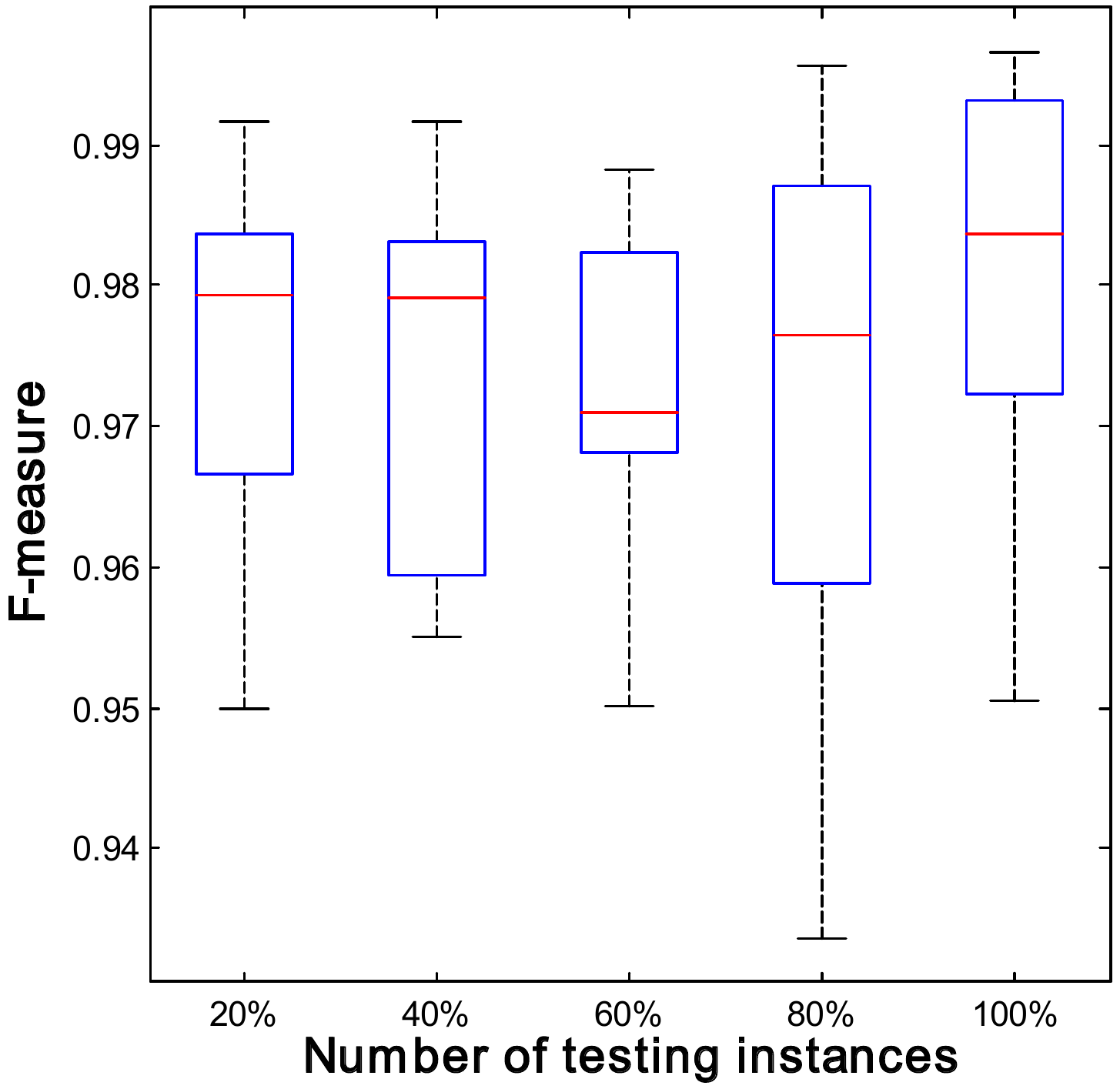}%
\label{fig_first_case}}
\hfil
\subfloat[]{\includegraphics[width=1.63in]{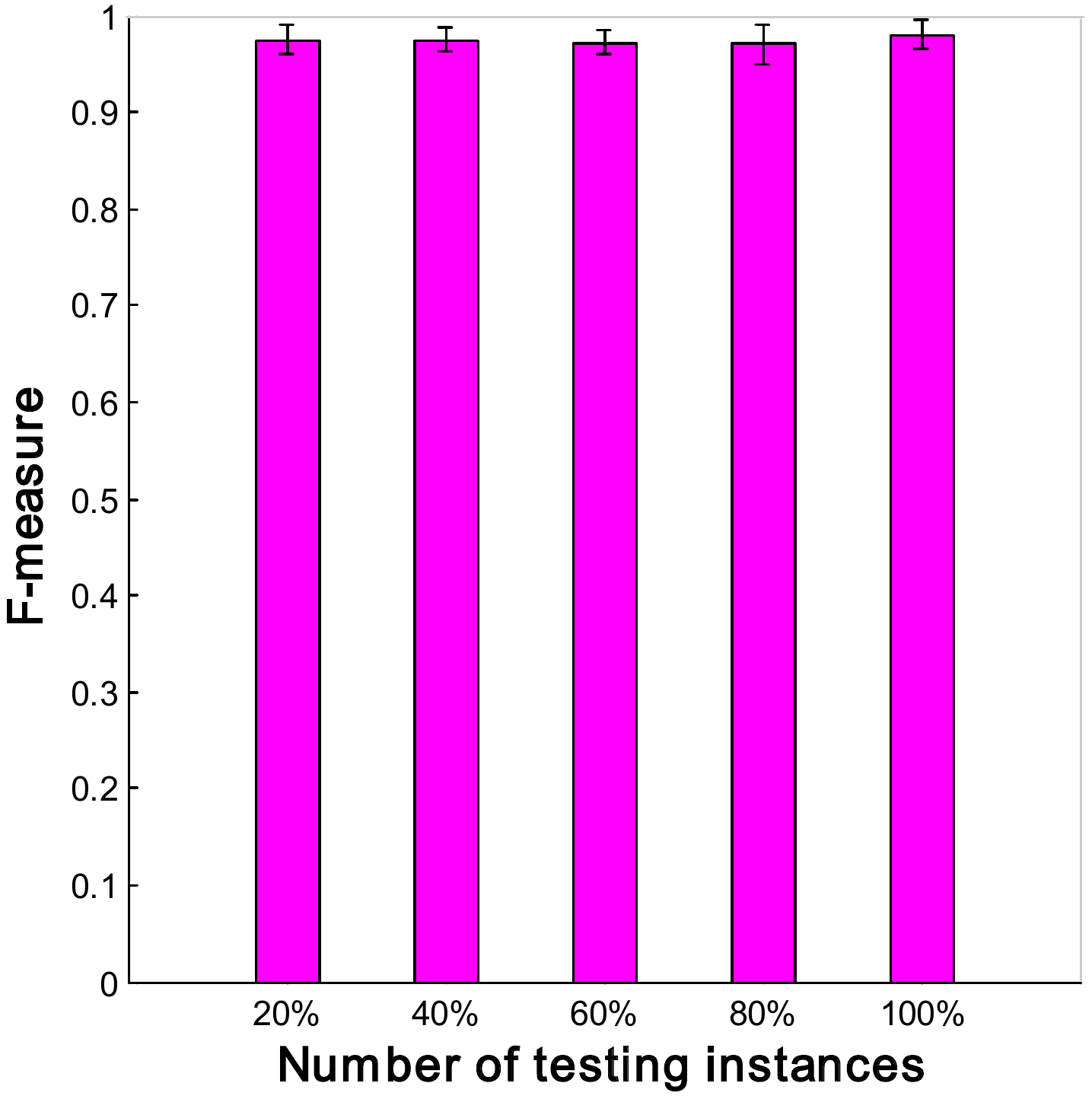}%
\label{fig_second_case}}
\caption{The F-measure on COIL20 dataset when openness = 12.29\%. (a) denotes the boxplot graph for the different number of testing instances, while (b) represents the corresponding errorbar graph, where error bars reflect the standard deviation.}
\label{fig_sim}
\end{figure}

\begin{figure}[!t]
\centering
\subfloat[]{\includegraphics[width=1.68in]{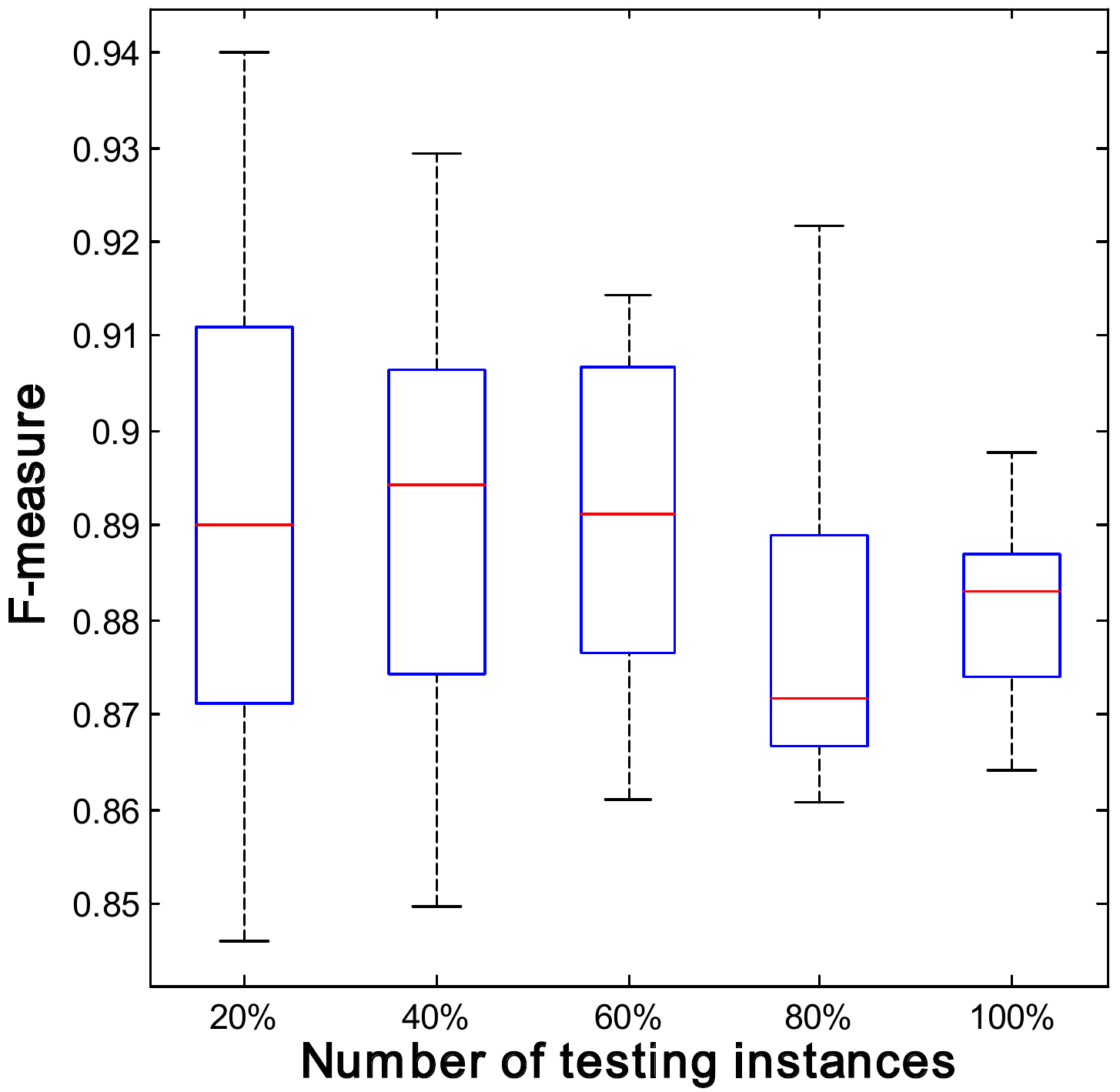}%
\label{fig_first_case}}
\hfil
\subfloat[]{\includegraphics[width=1.64in]{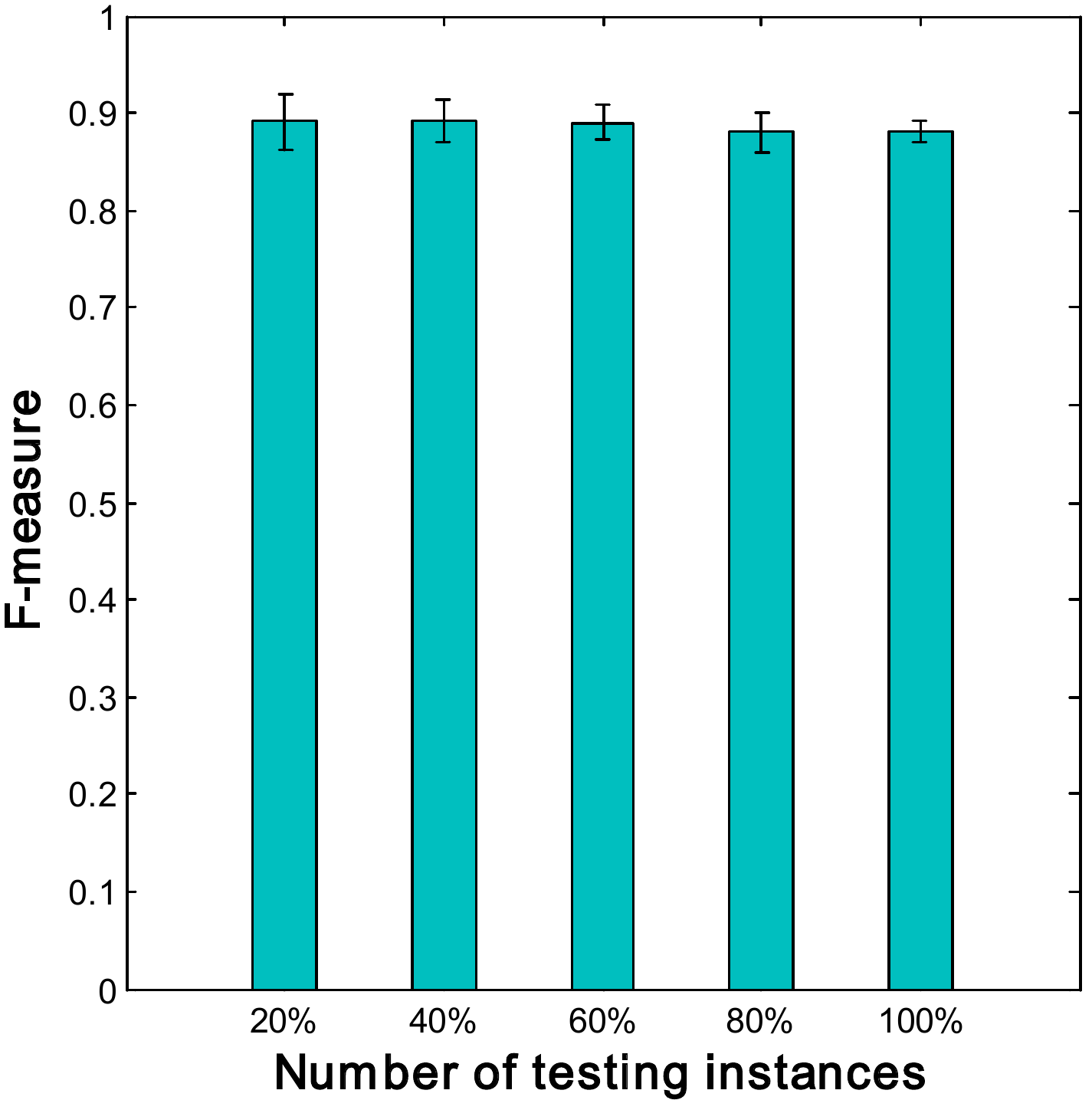}%
\label{fig_second_case}}
\caption{The F-measure on Extended Yale B dataset when openness = 18.35\%. (a) denotes the boxplot graph for the different number of testing instances, while (b) represents the corresponding errorbar graph, where error bars reflect the standard deviation.}
\label{fig_sim}
\end{figure}

\subsubsection{The Influence of the Batch Size on Performance}
Since our CD-OSR adopts the collective/batch decision strategy, meaning it can address the data in batch. Then a natural problem is that whether the size of batch for testing data has an influence on CD-OSR's performance. To explore this problem, we conduct the following experiments.

For each dataset in our experiments, we choose a medium openness: 18.35\% for LETTER (10 unknown classes), 12.29\% for USPS (3 unknown classes), 12.29\% for PENDIGITS (3 unknown classes), 12.29\% for COIL20 (6 unknown classes), and 18.35\% for YALEB (10 unknown classes) then vary the size of the batch by changing the number of testing instances. Specifically, we randomly select 20\%, 40\%, 60\%, 80\%, 100\% of the whole testing set for each dataset, then repeat 10 times of the co-clustering process to obtain the final experimental evaluation. Fig. 9--Fig. 13 show the performance of F-measure on these datasets. The (a) in these figures denotes the boxplot graph for the different number of testing instances, while (b) represents the corresponding errorbar graph, where error bars reflect the standard deviation. From these experimental results, we can find that the batch size of the testing instances has almost no significant influence on the performance of CD-OSR. Therefore, we can flexibly set the batch size according to the needs of the tasks.

\subsubsection{The Influence of parameter \bm{$\varepsilon$} on Performance}
Different from the thresholds in existing OSR methods (such as W-SVM, $P_I$-SVM and OSNN, etc.) where they are essentially used to determine the decision boundary between known and unknown classes, the parameter $\varepsilon$ here determines whether a subclass in certain class should be removed or not for avoiding the overfitting. As discussed in Section 1, existing OSR methods obtain such thresholds only based on the known class knowledge. However, this is actually very risky since it is usually agnostic where an instance of an unknown class appears. In contrast, the selection of $\varepsilon$ is relatively easier due to the intuition that both known and unknown classes get equal-treatment. In other words, no matter it is a known class or an unknown class, if any of its subclasses contains just very few instances, the subclass intuitively should  be removed to avoid overfitting.

In order to verify such an intuition, we perform sensitivity experiments on $\varepsilon$ in both closed and open set scenarios. For the
open set scenario, we adopt the same openness setting in subsection 4.2.2, i.e., 18.35\% for LETTER, 12.29\% for USPS, 12.29\% for PENDIGITS, 12.29\% for COIL20, and 18.35\% for YALEB. The candidate set of $\varepsilon$ is \{0, 0.00001, 0.0001, 0.001, 0.01, 0.1\}. As shown in Fig. 14(a-b), the small $\varepsilon$ (about $\leq0.01$) usually leads to satisfactory classification performance for most datasets (4/5), while the change in $\varepsilon$ value does not generally have a dramatic influence on performance. Of course there are also exceptions: the USPS's performance curve has a sharp drop when $\varepsilon<0.01$. \textbf{Even though so}, we can still observe that the trends of performance curves on all datasets are \textbf{nearly consistent} in both closed set and open set scenarios as the $\varepsilon$ value varies, corroborating our intuition. More interestingly, our CD-OSR obtains better performances at $\varepsilon=0.01$ on all datasets, which experimentally indicates the generality for the $\varepsilon$'s selection to great extent. Therefore, $\varepsilon$ can reasonably be set to 0.01 in all experiments of our paper, as described in subsection 4.1.2.

\begin{figure*}[!t]
\centering
\subfloat[]{\includegraphics[width=3.56in]{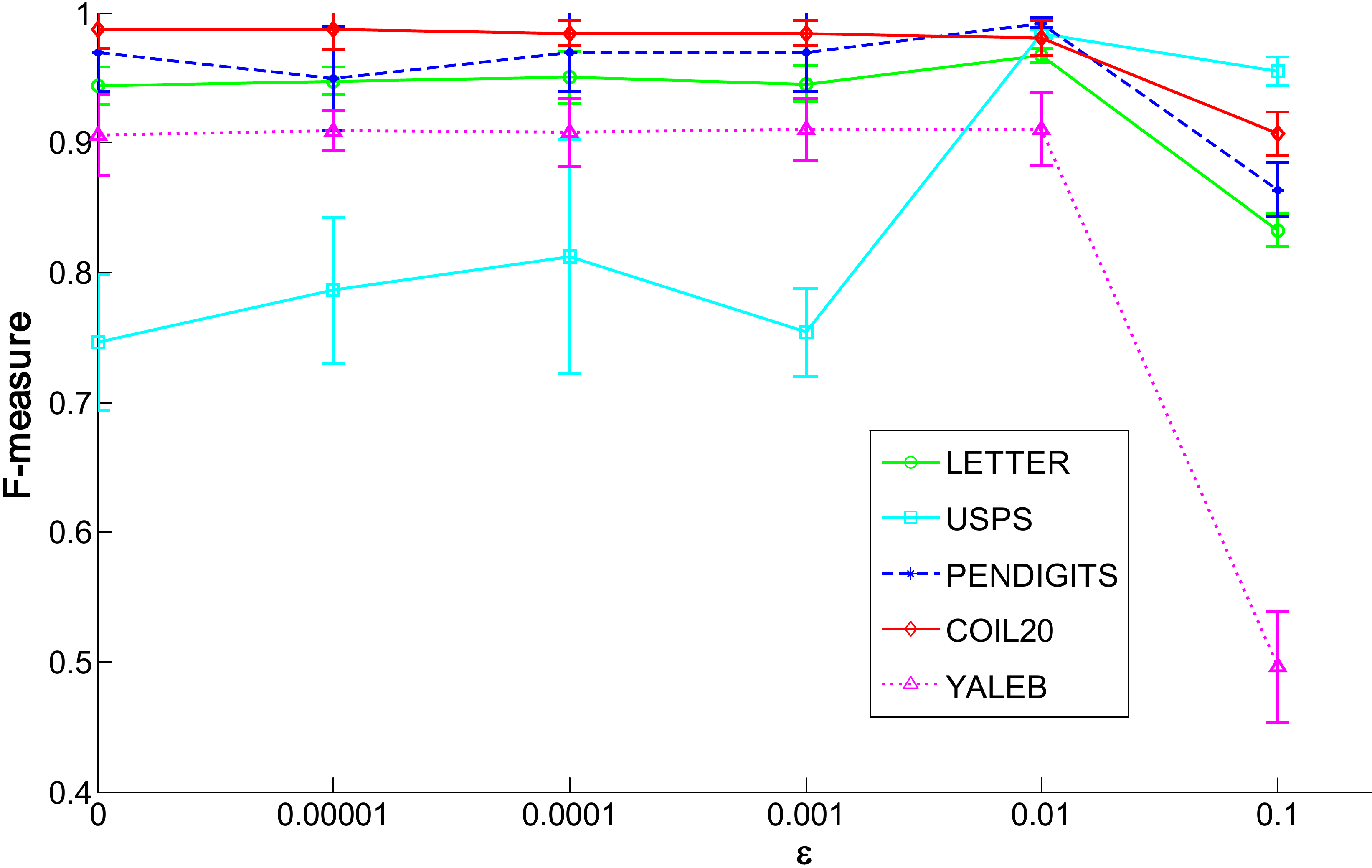}%
\label{fig_first_case}}
\hfil
\subfloat[]{\includegraphics[width=3.55in]{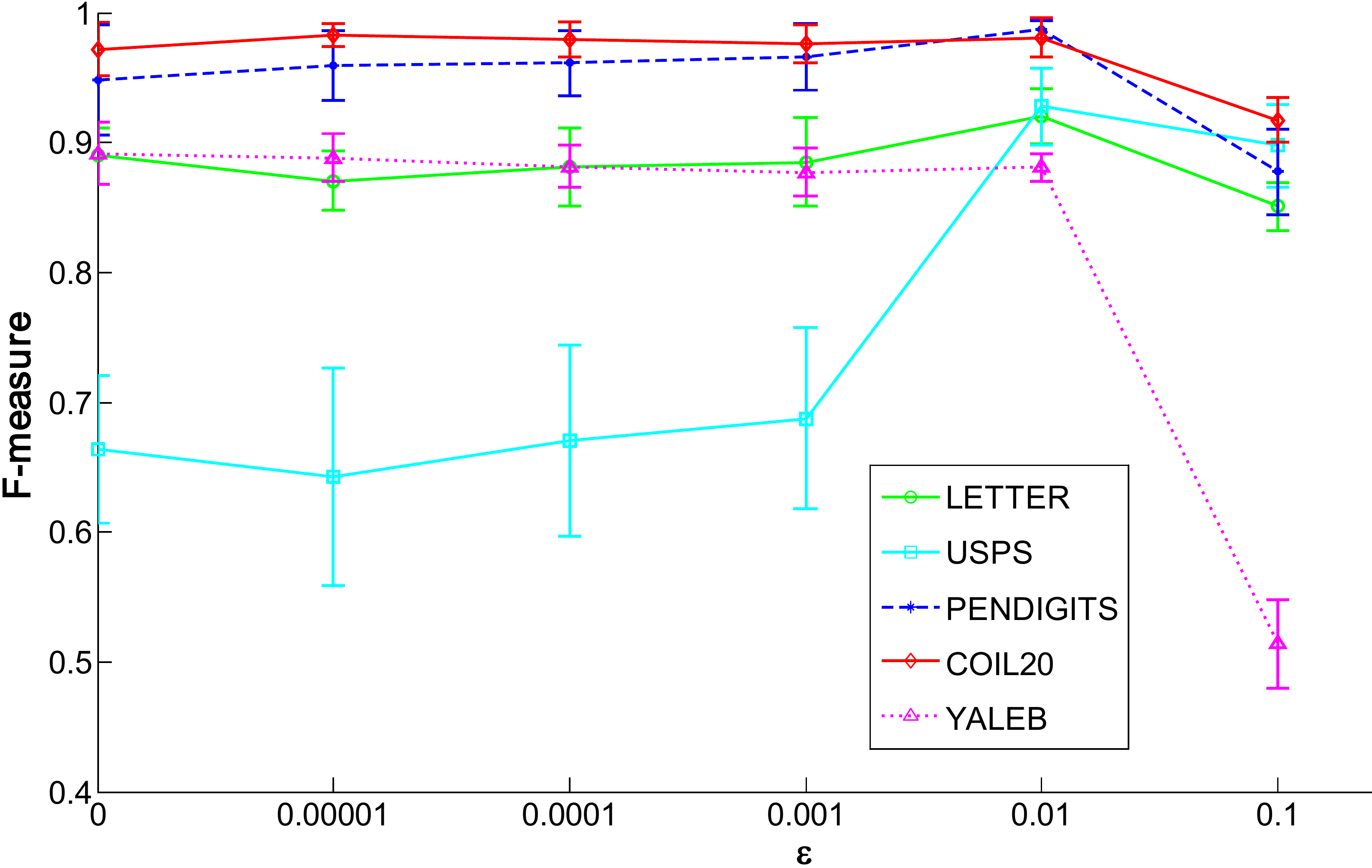}%
\label{fig_second_case}}
\caption{Parameter $\varepsilon$ sensitivity experiment. (a) shows the influence of $\varepsilon$ on classification performance in closed set scenario, while (b) shows this influence in open set scenario.}
\label{fig_sim}
\end{figure*}

\subsection{New class discovery}
In this subsection, we show the new class discovery function under our CD-OSR framework. Unlike the existing methods infer the unknown classes depending on accurately modeling for the known classes, the CD-OSR is able to provide explicit modeling for unknown classes appearing in testing. Thus it can discover new classes. Furthermore, instead of the two-step manner in \cite{Bendale2015Towards,Shu2018Unseen}, CD-OSR adopts a jointly solving process to handles both open set recognition and new class discovery.
As mentioned above, the true labels of unknown classes are unknown, making it impossible to further aggregate the newly generated subclasses. Therefore, each new class will inherently have only one subclass. In other words, these newly discovered classes are just at subclass level. Fortunately, we can still roughly estimate the number of real unknown classes based on the number of subclasses of known classes. This can be used as a prior for the other clustering algorithms (such as K-means, etc.) to further discover the real classes among the reject instances. Concretely, we have
\begin{equation}
\Delta = [\frac{|S_{\text{unknown}}|}{|S_{\text{known}}/(J-1)|} + 0.5],
\end{equation}
where $|S_\text{unknown}|$ denotes the number of subclasses corresponding to unknown classes, $|S_{\text{known}}|$ denotes the number of subclasses of known classes, and $J-1$ here represents the number of known classes. Note that this is just a relatively rough estimate. Actually, a more realistic operation is that we can construct a candidate set according to (14) for other clustering algorithms to quickly determine a more accurate estimate.

Furthermore, Table 1 and 2 respectively report the new class discovery function for USPS and PENDIGITS datasets under CD-OSR framework. Each table has three columns, where the first column denotes the corresponding group data (known classes and testing set), the second one indicates the number of the subclasses of the corresponding group, and the third one represents the proportions of the corresponding subclasses in their group.
\begin{table}[!h]
\renewcommand\arraystretch{1.3}
\tabcolsep 1.3pt
\caption{NEW CLASS DISCOVERY ON USPS DATASET}
\vspace*{-20pt}
\begin{center}
\def\temptablewidth{0.48\textwidth}
{\rule{\temptablewidth}{1pt}}
\begin{tabular*}{\temptablewidth}{@{\extracolsep{\fill}}ccccccccc}
\bf Group  & $\sharp$ Subclass & \multicolumn{7}{c}{Proportion of the corresponding subclass \% }\\
                                  \hline
    Class1 ('2') & 1  & $S_{13}$\\
                 &    & 98.67        \\
    Class2 ('9') & 4  & $S_1$ & $S_{24}$ &$S_{26} $ & $S_{27}$ \\
                 &    & 21.54 & 8.31 & 67.08 & 2.15     \\
    Class3 ('1') & 4  & $S_7$ & $S_{16}$ &$S_{18} $ & $S_{21}$ \\
                 &    & 7.54  & 37.01 & 3.21 & 51.12   \\
    Class4 ('6') & 3  & $S_{14}$ &$S_{15}$ & $S_{17}$ \\
                 &    & 26.05 & 50.30 & 20.96   \\
    Class5 ('3') & 7  & $S_{2}$ & $S_{3}$ &$S_{4}$ & $S_{5}$ & $S_{8}$ &$S_{9}$ & $S_{11}$\\
                 &    & 22.55 & 9.79 & 18.00 & 7.74 & 19.59& 2.73& 18.00     \\
   \hline
   Testing-Set   & 33 & \multicolumn{3}{c}{Known subclasses ($\sharp$: 19)} & \multicolumn{4}{c}{New subclasses ($\sharp$: 14)}\\
                 &    & \multicolumn{3}{c}{55.23} & \multicolumn{4}{c}{44.77}
\end{tabular*}
{\rule{\temptablewidth}{1pt}}

\emph{There are five classes in training set while the testing set has all the classes (5 known classes and 5 unknown classes). The table gives the estimates of mixture proportions and number of subclasses in each group under CD-OSR framework.}
\end{center}
\end{table}

For USPS shown in Table 1, we randomly select 5 classes (the real classes in brackets) as the known classes for training, while the testing set has all of the classes (5 known classes and 5 unknown classes). According to (14), we can obtain the rough estimate
\begin{equation}
\Delta = [\frac{|S_{\text{unknown}}|}{|S_{\text{known}}|/(J-1)} + 0.5] = [\frac{14}{19/5} + 0.5] = 4,
\end{equation}
where the estimated number of unknown classes $\Delta$ approaches the true number of unknown classes. Actually, we may obtain more accurate estimate if the number of subclasses for the corresponding classes are relatively uniform.

Similar to the operation of USPS, we also randomly choose 5 classes as the known classes for training, while testing set owns all the classes (5 known classes and 5 unknown classes). Table 2 reports the specific results, where we can obtain the similar conclusion described above. Moreover, we can discover the internal distribution corresponding to each known class at the subclass level, which can be seen as a by-product of our approach. For example, the distribution of instances corresponding to class 1 ('2') is very concentrated, where almost all the instances are clustered in one subclass $S_{13}$. In contrast, the instances' distribution of class 5 ('3') is relatively scattered, where most of the instances are scattered in 7 subclasses, as shown in Table 1.
\begin{table*}[ht]
\renewcommand\arraystretch{1.2}
\tabcolsep 0pt
\caption{NEW CLASS DISCOVERY ON PENDIGITS DATASET}
\vspace*{-18pt}
\begin{center}
\def\temptablewidth{0.95\textwidth}
{\rule{\temptablewidth}{1pt}}
\begin{tabular*}{\temptablewidth}{@{\extracolsep{\fill}}ccccccccccccccccc}
\bf Group  & $\sharp$ Subclass & \multicolumn{15}{c}{Proportion of the corresponding subclass \%}\\
                                  \hline
    Class1 ('4') & 7  & $S_5$ & $S_8$&$S_{16}$ & $S_{33}$& $S_{34}$& $S_{46}$&$S_{49}$\\
           &    &5.25   &5.39  &52.77    &12.68    &19.97 &1.02 &2.48      \\
    Class2 ('2') & 5  & $S_4$ & $S_{13}$ &$S_{41} $ & $S_{42}$ & $S_{44}$\\
                &    & 5.25 & 69.83 & 3.94 & 5.98 & 14.29     \\
    Class3 ('1') & 11  & $S_7$ & $S_{18}$ &$S_{20} $ & $S_{21}$ & $S_{22}$ &$S_{26}$ & $S_{35}$ &$S_{36}$ &$S_{37}$ &$S_{38}$ & $S_{39}$\\
                &    & 25.22 & 2.92 & 1.60 & 5.54 & 9.77 & 4.08& 13.85 & 33.38 & 1.17& 1.17& 1.02      \\
    Class4 ('9') & 15  & $S_1$ & $S_{2}$ &$S_{6} $ & $S_{9}$ & $S_{10}$ &$S_{11}$ & $S_{12}$ &$S_{24}$ &$S_{25}$ &$S_{27}$ & $S_{28}$ & $S_{29}$ & $S_{30}$ &$S_{31}$ & $S_{76}$\\
                &    & 2.53 & 7.74 & 6.00 & 2.05 & 13.11 & 18.96& 7.90 & 2.69 & 8.06& 7.27& 5.21& 8.06& 1.74& 4.90& 1.42      \\
    Class5 ('6') & 5  & $S_{14}$ & $S_{15}$ &$S_{40} $ & $S_{43}$ & $S_{48}$\\
                &    & 43.85 & 35.96 & 7.57 & 11.04 & 1.42     \\
   \hline
   Testing-Set & 75 & \multicolumn{7}{c}{Known subclasses ($\sharp$: 43)} & \multicolumn{8}{c}{New subclasses ($\sharp$: 32)}\\
               &    & \multicolumn{7}{c}{50.42} & \multicolumn{8}{c}{49.58}
\end{tabular*}
{\rule{\temptablewidth}{1pt}}

\emph{There are five classes in training set while the testing set has all the classes (5 known classes and 5 unknown classes). The table gives the estimates of mixture proportions and number of subclasses in each group under CD-OSR framework.}
\end{center}
\end{table*}

\section{Conclusion}
The main contribution of this paper is to present a collective/batch decision strategy for open set recognition with an aim to extend existing open set recognition for new class discovery while considering correlations among the testing instances. To achieve this goal, we adapt HDP with slight modification to addressing the OSR problem, leading an initial solution towards collective decision in OSR. What needs to be highlighted is that our CD-OSR does not overly depend on the training data and can achieve adaptive change as the data changes. More precisely, CD-OSR can provide explicit modeling for unknown classes appearing in testing. This naturally leads to the new class discovery function, even though it is just at the subclass level. Furthermore, unlike the existing methods dealing with the OSR problem from the discriminative model perspective, the CD-OSR actually addresses this problem from the generative model perspective due to the use of HDP. Finally, the experimental results on a set of benchmark datasets indicate the validity of our learning framework.

Besides, it should be noted that modeling unknown classes only performs in the testing phase of our CD-OSR, whilst no available knowledge from unknown classes is utilized during the training phase. This seems to have the flavor of lazy learning to some extent. Thus the co-clustering process (testing process) will be repeated when other batch testing data arrives, resulting in higher computational overhead. Therefore, overcoming this limitation will be a promising research direction in the future. Furthermore, since the CD-OSR currently does not make full use of the discriminative information from the known class labels, embedding this kind of information more effectively will be also worth further exploring. In addition, replacing the Gibbs sampler with scalable deterministic inference techniques is a promising direction as well in the future work. In conclusion, the CD-OSR is just as a conceptual proof for open set recognition towards collective decision at present. Therefore, the more effective collective decision methods for OSR are worth further exploring in the future work.

\ifCLASSOPTIONcompsoc
  \section*{Acknowledgments}
\else
  \section*{Acknowledgment}
\fi

The authors would like to thank the support from NSFC under Grant No. 61672281, the Key Program of NSFC under Grant No. 61732006 and the Postgraduate Research \& Practice Innovation Program of Jiangsu Province under Grant No. KYCX18\_0306.

\ifCLASSOPTIONcaptionsoff
  \newpage
\fi



%

%
%
\bibliographystyle{ieeetr}
\bibliography{mybibfile}

\begin{thebibliography}{10}

\bibitem{Scheirer2013Toward}
W.~J. Scheirer, A.~D.~R. Rocha, A.~Sapkota, and T.~E. Boult, ``Toward open set
  recognition,'' {\em IEEE Transactions on Pattern Analysis and Machine
  Intelligence}, vol.~35, no.~7, pp.~1757--1772, 2013.

\bibitem{Phillips2005Evaluation}
P.~J. Phillips, P.~Grother, and R.~Micheals, {\em Evaluation Methods in Face
  Recognition}.
\newblock Springer New York, 2005.

\bibitem{Li2005Open}
F.~Li and H.~Wechsler, ``Open set face recognition using transduction,'' {\em
  IEEE Transactions on Pattern Analysis and Machine Intelligence}, vol.~27,
  no.~11, pp.~1686--1697, 2005.

\bibitem{wu2007novel}
Q.~Wu, C.~Jia, and W.~Chen, ``A novel classification-rejection sphere svms for
  multi-class classification problems,'' in {\em Natural Computation, 2007.
  ICNC 2007. Third International Conference on}, vol.~1, pp.~34--38, IEEE,
  2007.

\bibitem{wang2009A}
Y.-C.~F. Wang and D.~Casasent, ``A support vector hierarchical method for
  multi-class classification and rejection,'' in {\em Neural Networks, 2009.
  IJCNN 2009. International Joint Conference on}, pp.~3281--3288, IEEE, 2009.

\bibitem{Heflin2012Detecting}
B.~Heflin, W.~Scheirer, and T.~E. Boult, ``Detecting and classifying scars,
  marks, and tattoos found in the wild,'' in {\em IEEE Fifth International
  Conference on Biometrics: Theory, Applications and Systems}, pp.~31--38,
  2012.

\bibitem{Pritsos2013Open}
D.~A. Pritsos and E.~Stamatatos, ``Open-set classification for automated genre
  identification,'' in {\em European Conference on Advances in Information
  Retrieval}, pp.~207--217, 2013.

\bibitem{Scheirer2014Probability}
W.~Scheirer, L.~Jain, and T.~Boult, ``Probability models for open set
  recognition,'' {\em IEEE Transactions on Pattern Analysis and Machine
  Intelligence}, vol.~36, no.~11, pp.~2317--2324, 2014.

\bibitem{jain2014multi}
L.~P. Jain, W.~J. Scheirer, and T.~E. Boult, ``Multi-class open set recognition
  using probability of inclusion,'' in {\em European Conference on Computer
  Vision}, pp.~393--409, Springer, 2014.

\bibitem{Scherreik2016Open}
M.~D. Scherreik and B.~D. Rigling, ``Open set recognition for automatic target
  classification with rejection,'' {\em IEEE Transactions on Aerospace and
  Electronic Systems}, vol.~52, no.~2, pp.~632--642, 2016.

\bibitem{junior2017nearest}
P.~R.~M. J{\'u}nior, R.~M. de~Souza, R.~d.~O. Werneck, B.~V. Stein, D.~V.
  Pazinato, W.~R. de~Almeida, O.~A. Penatti, R.~d.~S. Torres, and A.~Rocha,
  ``Nearest neighbors distance ratio open-set classifier,'' {\em Machine
  Learning}, vol.~106, no.~3, pp.~359--386, 2017.

\bibitem{zhang2017sparse}
H.~Zhang and V.~M. Patel, ``Sparse representation-based open set recognition,''
  {\em IEEE transactions on pattern analysis and machine intelligence},
  vol.~39, no.~8, pp.~1690--1696, 2017.

\bibitem{rudd2018extreme}
E.~M. Rudd, L.~P. Jain, W.~J. Scheirer, and T.~E. Boult, ``The extreme value
  machine,'' {\em IEEE transactions on pattern analysis and machine
  intelligence}, vol.~40, no.~3, pp.~762--768, 2018.

\bibitem{Vignotto2018Extreme}
E.~Vignotto and S.~Engelke, ``Extreme value theory for open set
  classification-gpd and gev classifiers,'' {\em arXiv preprint
  arXiv:1808.09902}, 2018.

\bibitem{Bendale2015Towards1}
A.~Bendale and T.~E. Boult, ``Towards open set deep networks,'' {\em
  Proceedings of the IEEE conference on computer vision and pattern
  recognition}, pp.~1563--1572, 2016.

\bibitem{Rozsa2017Adversarial}
A.~Rozsa, M.~Günther, and T.~E. Boult, ``Adversarial robustness: Softmax
  versus openmax,'' {\em arXiv preprint arXiv:1708.01697}, 2017.

\bibitem{Hassen2018Learning}
M.~Hassen and P.~K. Chan, ``Learning a neural-network-based representation for
  open set recognition,'' {\em arXiv preprint arXiv:1802.04365}, 2018.

\bibitem{Shu2017DOC}
L.~Shu, H.~Xu, and B.~Liu, ``Doc: Deep open classification of text documents,''
  {\em arXiv preprint arXiv:1709.08716}, 2017.

\bibitem{Cardoso2015A}
D.~O. Cardoso, F.~Franca, and J.~Gama, ``A bounded neural network for open set
  recognition,'' in {\em International Joint Conference on Neural Networks},
  pp.~1--7, 2015.

\bibitem{Cardoso2017Weightless}
D.~O. Cardoso, J.~Gama, and F.~M.~G. França, ``Weightless neural networks for
  open set recognition,'' {\em Machine Learning}, no.~106(9-10),
  pp.~1547--1567, 2017.

\bibitem{Shu2018Unseen}
L.~Shu, H.~Xu, and B.~Liu, ``Unseen class discovery in open-world
  classification,'' {\em arXiv preprint arXiv:1801.05609}, 2018.

\bibitem{ge2017generative}
Z.~Ge, S.~Demyanov, Z.~Chen, and R.~Garnavi, ``Generative openmax for
  multi-class open set classification,'' {\em arXiv preprint arXiv:1707.07418},
  2017.

\bibitem{Xu2018Learning}
H.~Xu, B.~Liu, and P.~S. Yu, ``Learning to accept new classes without
  training,'' {\em arXiv preprint arXiv:1809.06004}, 2018.

\bibitem{Wang2017Joint}
L.~Wang and S.~Chen, ``Joint representation classification for collective face
  recognition,'' {\em Pattern Recognition}, vol.~63, no.~5, pp.~182--192, 2017.

\bibitem{Bendale2015Towards}
A.~Bendale and T.~Boult, ``Towards open world recognition,'' in {\em Computer
  Vision and Pattern Recognition}, pp.~1893--1902, 2015.

\bibitem{gershman2012tutorial}
S.~J. Gershman and D.~M. Blei, ``A tutorial on bayesian nonparametric models,''
  {\em Journal of Mathematical Psychology}, vol.~56, no.~1, pp.~1--12, 2012.

\bibitem{thibaux2007hierarchical}
R.~Thibaux and M.~I. Jordan, ``Hierarchical beta processes and the indian
  buffet process,'' in {\em Artificial Intelligence and Statistics},
  pp.~564--571, 2007.

\bibitem{Teh2006Hierarchical}
Y.~W. Teh, M.~I. Jordan, M.~J. Beal, and D.~M. Blei, ``Hierarchical dirichlet
  processes,'' {\em Publications of the American Statistical Association},
  vol.~101, no.~476, pp.~1566--1581, 2006.

\bibitem{teh2011dirichlet}
Y.~W. Teh, ``Dirichlet process,'' in {\em Encyclopedia of machine learning},
  pp.~280--287, Springer, 2011.

\bibitem{Gershman2012A}
S.~J. Gershman and D.~M. Blei, ``A tutorial on bayesian nonparametric models,''
  {\em Journal of Mathematical Psychology}, vol.~56, no.~1, pp.~1--12, 2012.

\bibitem{Canini2010Modeling}
K.~R. Canini, M.~M. Shashkov, and T.~L. Griffiths, ``Modeling transfer learning
  in human categorization with the hierarchical dirichlet process,'' in {\em
  International Conference on Machine Learning}, pp.~151--158, 2010.

\bibitem{akova2012self}
F.~Akova, M.~Dundar, Y.~Qi, and B.~Rajwa, ``Self-adjusting models for
  semi-supervised learning in partially observed settings,'' in {\em Data
  Mining (ICDM), 2012 IEEE 12th International Conference on}, pp.~21--30, IEEE,
  2012.

\bibitem{ishwaran2001gibbs}
H.~Ishwaran and L.~F. James, ``Gibbs sampling methods for stick-breaking
  priors,'' {\em Journal of the American Statistical Association}, vol.~96,
  no.~453, pp.~161--173, 2001.

\bibitem{frey1991letter}
P.~W. Frey and D.~J. Slate, ``Letter recognition using holland-style adaptive
  classifiers,'' {\em Machine learning}, vol.~6, no.~2, pp.~161--182, 1991.

\bibitem{hull1994database}
J.~J. Hull, ``A database for handwritten text recognition research,'' {\em IEEE
  Transactions on pattern analysis and machine intelligence}, vol.~16, no.~5,
  pp.~550--554, 1994.

\bibitem{bilenko2004integrating}
M.~Bilenko, S.~Basu, and R.~J. Mooney, ``Integrating constraints and metric
  learning in semi-supervised clustering,'' in {\em Proceedings of the
  twenty-first international conference on Machine learning}, p.~11, ACM, 2004.

\bibitem{nene1996columbia}
S.~A. Nene, S.~K. Nayar, H.~Murase, {\em et~al.}, ``Columbia object image
  library (coil-20),'' 1996.

\bibitem{georghiades2001few}
A.~S. Georghiades, P.~N. Belhumeur, and D.~J. Kriegman, ``From few to many:
  Illumination cone models for face recognition under variable lighting and
  pose,'' {\em IEEE Transactions on Pattern Analysis \& Machine Intelligence},
  no.~6, pp.~643--660, 2001.

\bibitem{greene1989partially}
T.~Greene and W.~S. Rayens, ``Partially pooled covariance matrix estimation in
  discriminant analysis,'' {\em Communications in Statistics-Theory and
  Methods}, vol.~18, no.~10, pp.~3679--3702, 1989.

\bibitem{Escobar1995Bayesian}
M.~Escobar and MikeWest, ``Bayesian density estimation and inference using
  mixtures,'' {\em Publications of the American Statistical Association},
  vol.~90, no.~430, pp.~577--588, 1995.

\end{thebibliography}

\end{document}